\def\1{\bm{1}}
\DeclareMathAlphabet{\mathsfit}{\encodingdefault}{\sfdefault}{m}{sl}
\SetMathAlphabet{\mathsfit}{bold}{\encodingdefault}{\sfdefault}{bx}{n}
\newcommand{\E}{\mathbb{E}}
\newcommand{\R}{\mathbb{R}}
\newcommand{\KL}{D_{\mathrm{KL}}}
\newcommand{\pikp}{{\pi_{k+1}}}
\newcommand{\pik}{{\pi_{k}}}
\newcommand{\Qtk}{{Q_{\tau}^{k}}}
\newcommand{\Qtkp}{{Q_{\tau}^{k+1}}}
\newcommand{\qkp}{{q_{k+1}}}
\newcommand{\Qtp}{{Q_{\tau}^{\pi}}}
\newcommand{\Vtp}{{V_{\tau}^{\pi}}}
\newcommand{\Ttkp}{{T_{\tau}^{k+1}}}
\newcommand{\Tts}{{T_{\tau}^{\star}}}
\newcommand{\Ttk}{{T_{\tau}^{k}}}
\newcommand{\Qts}{{Q^{\star}_\tau}}
\newcommand{\Vts}{{V^{\star}_\tau}}
\newcommand{\xik}{{\xi_k}}
\newcommand{\xikp}{{\xi_{k+1}}}
\newcommand{\Rx}{{R_\mathrm{x}}}
\newcommand{\pis}{{\pi^{\star}}}
\newcommand{\bR}{\bar{R}}
\newcommand{\epik}{{\epsilon_{\Delta_k}}}
\newcommand{\epiim}{{\epsilon_{\Delta_{i-1}}}}
\newcommand{\dqk}{{\Delta_q^{k}}}
\newcommand{\norm}[1]{\left\lVert#1\right\rVert}
\newcommand{\ninf}[1]{\norm{#1}_\infty}
\newcommand{\err}{{T}}
\newcommand{\refp}[1]{\ref{#1}}
\newtheorem{prop}{Proposition}[section]
\newtheorem{thm}{Theorem}[section]
\newtheorem{lem}[thm]{Lemma}
\definecolor{darkgreen}{RGB}{0,180,30}
\newcommand{\dg}[1]{}
\title{StaQ: a Finite Memory Approach to Discrete Action Policy Mirror Descent}
\author{Alex Davey\textsuperscript{1,2}, Alena Shilova\textsuperscript{1}, Brahim Driss\textsuperscript{2}, Riad Akrour\textsuperscript{2}}
\keywords{Policy Mirror Descent, Entropy Regularization, Deep RL} 
\begin{document}

\makeCover  
\maketitle  

\begin{abstract}
In Reinforcement Learning (RL), regularization with a Kullback-Leibler divergence that penalizes large deviations between successive policies has emerged as a popular tool both in theory and practice. This family of algorithms, often referred to as Policy Mirror Descent (PMD), has the property of averaging out policy evaluation errors which are bound to occur when using function approximators. However, exact PMD has remained a mostly theoretical framework, as its closed-form solution involves the sum of all past Q-functions which is generally intractable. A common practical approximation of PMD is to follow the natural policy gradient or use actor-critic approaches, but this potentially introduces errors in the policy update. In this paper, we propose and analyze PMD-like algorithms for discrete action spaces that only keep the last $M$ Q-functions in memory. We show theoretically that for a finite and large enough $M$, an RL algorithm can be derived that introduces no errors from the policy update, yet keeps the desirable PMD property of averaging out policy evaluation errors. Using an efficient GPU implementation, we then show empirically on medium-scale RL benchmarks such as MinAtar that increasing $M$ improves performance up to a certain threshold after which the performance becomes close to that of exact PMD, reinforcing the theoretical findings that using an infinite sum might be unnecessary and that keeping in memory the last M Q-functions is a practical and theoretically grounded implementation of PMD.
\end{abstract}

\section{Introduction}
\label{sec:intro}
Deep RL has seen rapid development in the past decade, achieving super-human results on several decision making tasks~\citep{dqn, Go, GT}. However, the use of neural networks as function approximators exacerbates many challenges of RL, such as the brittleness to hyperparameters~\citep{henderson18} and the poor alignment between empirical behavior and theoretical understandings~\citep{Ilyas2020A, Discor, deadlytriad}. To address these issues, many deep RL algorithms consider adding regularization terms, one of which being to penalize the Kullback-Leibler divergence (labeled $\KL$ in the following) between successive policies~ \citep{TRPO,acktr,acer,MDQN}. This family of algorithms is often called Policy Mirror Descent~(PMD, \citet{Politex, improvedpolitex, PMD}) for its connection---made more explicit in Sec.~\ref{sec:pmd}---to the first-order convex optimization method Mirror Descent~\citep{Nemirovsky1983,Beck2003}. One known property of PMD algorithms, at least in the context of value iteration, is that it averages out policy evaluation errors \citep{Vieillard20}. This property can be intuited from the nature of the policy $\pi_k$ at iteration $k$, which is a Boltzmann distribution and its (unnormalized) log-probabilities are a weighted average of past Q-function estimates $\{q_{i}\}_{i=0}^{k-1}$
\begin{align}
    \pi_k\propto \exp\left(\alpha \sum_{i=0}^{k-1} \beta^i q_{k-i}\right), \label{eq:pol}
\end{align}
for temperature $\alpha > 0$ and weight $\beta\in(0,1]$. In contrast, unregularized RL would pick actions according to $\arg\max_a q_{k-1}(\cdot, a)$, which would be very sensitive to the potentially large policy evaluation errors that can occur in deep RL \citep{Ilyas2020A}. 

While averaging Q-function estimates might cancel out their errors, implementing Eq.~\ref{eq:pol} exactly can become intractable due to the infinite nature of the sum. This is especially true when a non-linear function approximator is used for $\{q_{i}\}_{i=0}^{k-1}$, precluding the existence of a closed-form expression for the policy. As such, prior work considered several types of approximation to PMD, such as following the natural gradient \citep{NPG, petersNPG, TRPO} or performing a few gradient steps over the regularized policy update loss \citep{PPO,tomar2022mirror}. Instead, we study both theoretically and empirically an alternative update---that we call StaQ, that is a weighted average of at most the last $M$ Q-function estimates
\begin{align}
    \pi_k\propto \exp\left(\frac{\alpha}{1-\beta^M} \sum_{i=0}^{M-1} \beta^i q_{k-i}\right).\label{eq:intro:staq}
\end{align}
From a theoretical point of view, we show that for $M$ large enough, such a truncation does not introduce a policy update error, i.e. that in the absence of policy evaluation errors, StaQ will converge exactly to the optimal policy. Moreover, StaQ has a similar averaging of error property as PMD, up to some extra terms that decay exponentially fast w.r.t. $M$; suggesting that as we increase $M$, we can quickly recover the behavior of exact PMD. From a practical point of view, the study of such an algorithm is timely in the age of batched GPU computations: indeed we show that for medium-sized problems such as the image-based MinAtar \citep{Minatar} tasks, increasing $M$ has little to no impact on the run-time, making Eq.~\ref{eq:intro:staq} a practical alternative policy update to actor-critic approaches. This is especially true since the policy update in the discrete action case, to which the scope of this paper is limited to, is optimization free.  
\section{Related Work}
\label{sec:related}
\textbf{Regularization in RL.} Regularization has seen widespread usage in RL. It was used with~(natural) policy gradient~\citep{NPG, TRPO, yuan2022linear}, policy search~\citep{psearch}, policy iteration~\citep{Politex, PMD} and value iteration methods~\citep{SoftQ, MDQN}. Common choices of regularizers include minimizing the $\KL$ between the current and previous policy~\citep{DPP, TRPO} or encouraging high Shannon entropy~\citep{SoftQ, SAC_app}, but other regularizers exist~\citep{Tsallis, ampo}. We refer the reader to~\citet{Neu,Geist19} for a broader categorization of entropy regularizers and their relation to existing deep RL methods. In this paper, we use both a $\KL$ penalization w.r.t. the previous policy and a Shannon entropy bonus in a policy iteration context. In~\citet{MDQN}, both types of regularizers were used but in a value iteration context. \citet{Politex, improvedpolitex} are policy iteration methods but only use $\KL$ penalization. %

\textbf{Policy Mirror Descent.}
Prior works on PMD focus mostly on performing a theoretical analysis of convergence speeds or sample complexity for different choices of regularizers~\citep{li2022homotopic,johnson2023optimal,ampo,PMD,Lan,protopapas2024policy}. As PMD provides a general framework for many regularized RL algorithms, PMD theoretical results can be naturally extended to many policy gradient algorithms like natural policy gradient \citep{khodadadian2021linear} and TRPO \citep{TRPO} as shown in \cite{Neu, Geist19}. However, the deep RL algorithms from the PMD family generally %
perform inexact policy updates, adding a source of error from the theoretical perspective. For example, TRPO and the more recent MDPO~\citep{tomar2022mirror} rely on approximate policy updates using policy gradients. %
It was shown in~\cite{PMD}, that an inexact policy update will add an additional error floor independent of the policy evaluation error. By proposing a finite-memory policy update, we provide new convergence results that offer a new deep RL algorithm policy update step that does not introduce any additional policy update error, in contrast to prior works.

\textbf{Ensemble methods and growing neural architectures in RL.} Saving past Q-functions has previously been investigated in the context of policy evaluation. In \citet{tosatto2017boosted}, a first Q-function is learned, then frozen and a new network is added, learning the residual error. \citet{shi2019regularized} uses past Q-functions to apply Anderson acceleration for a value iteration type of algorithm. \citet{adqn} extend DQN by saving the past 10 Q-functions, and using them to compute lower variance target values. Instead of past $Q$-functions, \citet{chen2021randomized,lee2021sunrise,agarwal2020optimistic,lan2020maxmin} use an ensemble of independent $Q$-network functions to stabilize $Q$-function learning in DQN type of algorithms. The aforementioned works are orthogonal to ours, as they are concerned with learning one $Q$, which can all be integrated into StaQ. Conversely, both \citet{PhilippeCascade} and \citet{Micarl} use a special neural architecture called the cascade-correlation network \citep{cascadeNN} to grow neural policies. The former work studies such policies in combination with LSPI~\citep{LSPI}, without entropy regularization. The latter work is closer to ours, using a $\KL$-regularizer but without a deletion mechanism. As such the policy grows indefinitely, limiting the scaling of the method. Finally, \citet{Politex} save the past 10 Q-functions to compute the policy in Eq.~\refp{eq:pol} for the specific case of $\beta = 1$, but do not study the impact of deleting older Q-functions as we do in this paper. Growing neural architectures are common in the neuroevolution community \citep{Neat}, and have been used for sequential decision making tasks, but are beyond the scope of this paper. 

\textbf{Parallels with Continual Learning.} Continual Learning (CL) moves from the usual i.i.d assumption of supervised learning towards a more general assumption that data distributions change through time~\citep{Parisi, CLRobo, DeLangeContSurv, WangContSurv}. This problem is closely related to that of incrementally updating the policy $\pi_k$, due to the changing data distributions that each Q-function is trained on, and our approach of using a growing neural architecture to implement a $\KL$-regularized policy update can be seen as a form of parameter isolation method in the CL literature, which offer some of the best stability-performance trade-offs (see Sec. 6 in~\citet{DeLangeContSurv}). Parameter isolation methods were explored in the context of continual RL~\citep{PNN}, yet remain understudied in a standard \emph{single-task} RL setting. 

\section{Preliminaries}
\label{sec:prelim}
Let a Markov Decision Problem (MDP) be defined by the tuple $(S, A, R, P, \gamma)$, such that $S$ and $A$ are finite state and action spaces, $R$ is a bounded reward function $R:S\times A\mapsto [-\Rx, \Rx]$ for some positive constant $\Rx$, $P$ defines the (Markovian) transition probabilities of the decision process and $\gamma$ is a discount factor. The algorithms presented in this paper can be extended to more general state spaces. However, we limit ourselves to studying finite action spaces, to simplify the sampling from the Boltzmann distribution for the policy (Eq.~\refp{eq:intro:staq}), which would require deeper algorithmic changes for continuous actions spaces, and would remove the benefit of an optimization-free policy update. %

Let $\Delta(A)$ be the space of probability distributions over $A$, and $h$ be the negative entropy given by $h: \Delta( A) \mapsto \mathbb{R}$, $h(p) = p \cdot \log p$, where $\cdot$ is the dot product and the $\log$ is applied element-wise to the vector $p$. Let $\pi :S\mapsto \Delta (A)$ be a stationary stochastic policy  mapping states to distributions over actions. We denote the entropy regularized V-function for policy $\pi$ and regularization weight $\tau > 0$ as $V_{\tau}^{\pi}: S\mapsto \R$, which is defined by 
\begin{align}
    V_{\tau}^{\pi}(s) =\mathbb{E}_{\pi}\left[\sum_{t=0}^{\infty} \gamma ^{t}\{R(s_{t},a_{t}) -\tau h(\pi (s_{t}))\}\middle| s_{0}=s\right].
\end{align}
In turn, the entropy regularized Q-function is given by $Q_{\tau}^{\pi}(s,a) =R(s,a) +\gamma \mathbb{E}_{s'}\left[V_{\tau}^{\pi}(s')\right]$. The V-function {can be written as the expectation of the} Q-function {plus the current state entropy, i.e.} $V_{\tau}^{\pi}(s) =\mathbb{E}_{a}\left[Q_{\tau}^{\pi}(s,a)\right] -\tau h( \pi(s))$ which leads to the Bellman equation 
    $Q_{\tau}^{\pi}(s,a) =R(s,a) +\gamma \mathbb{E}_{s',a'}\left[Q_{\tau}^{\pi }( s',a') - \tau h(\pi (s'))\right]$.
In the following, we will write policies of the form $\pi(s)\propto\exp(Q(s, \cdot))$ for all $s\in S$ more succinctly as $\pi\propto\exp(Q)$. We define optimal V and Q functions where
for all $s\in S, a\in A$, $\Vts(s) := \max_\pi \Vtp(s)$ and  $\Qts(s, a) := \max_\pi \Qtp(s, a)$.
Moreover, the policy $\pi^\star\propto \exp(\Qts/\tau)$ satisfies $Q_{\tau}^{\pi^\star} = \Qts$ and $V_{\tau}^{\pi^\star} = \Vts$ simultaneously for all $s\in S$ \citep{PMD}. In the following, we will overload notations of real functions defined on $S\times A$ and allow them to only take a state input and return a vector in $\R^{|A|}$. For example, $\Qtp(s)$ denotes a vector for which the $i^{\text{th}}$ entry $i\in\{1,\dots,|A|\}$ is equal to $\Qtp(s,i)$. 

For the convergence analysis, we will make use of a matrix representation of the MDP by overloading the notations of $R$ and $P$. Let $I_s:S\mapsto\{1, \dots, |S|\}$ be an arbitrary bijective function that will provide an ordering over the state space, and we let $I_{sa}:S\times A\mapsto \{1, \dots,|S||A|\}$ be an arbitrary bijective function that orders the state-action space. Using these indexing functions, we will overload the notation of the reward function by seeing $R$ as an $|S||A|\times 1$ matrix such that row $I_{sa}(s,a)$ and column $1$ of the matrix $R$ satisfies $R_{(I_{sa}(s,a), 1)} = R(s, a)$. Similarly for the transition function $P$ which we see as an $|S||A|\times |S|$ matrix such that $P_{(I_{sa}(s,a),I_s(s'))}=P(s'|s, a)$. 
A policy $\pi$ will be seen as a $|S|\times|S||A|$ matrix such that $\pi_{(I_s(s'),I_{sa}(s,a))}=\pi(a|s)$ if $s'=s$ and $0$ otherwise. On such matrix representation of the policy we can apply the negative entropy row-wise such that $h(\pi)$ is a $|S|\times 1$ matrix where $h(\pi)_{(I_s(s), 1)} = h(\pi(s))$. Using all of the above notations, we write the Bellman equation associated to policy $\pi$ in matrix notation over a Q-function represented by an $|S||A|\times 1$ matrix that satisfies
$T_\tau^\pi \Qtp = \Qtp = R + \gamma P \left[\pi \Qtp - \tau h(\pi)\right]$.
We also write the Bellman optimality operator on matrices as 
$\Tts f = R + \gamma P \left[\max_{p} p f - \tau h(p)\right]$,
where the maximization $\max_p$ is made row-wise over probability matrices of shape $|S|\times |S||A|$ encoded using the same convention as policy matrices $\pi$ described above.

\section{Policy Mirror Descent and averaging of error}
\label{sec:pmd}
To find $\pis$, we focus on Entropy-regularized Policy Mirror Descent (EPMD) methods \citep{Neu, Politex, improvedpolitex} and notably on those that regularize the policy update with an entropy and $\KL$ term and use an entropy regularized Bellman operator \citep{Lan, PMD}. The EPMD setting discussed here is also similar to the regularized natural policy gradient algorithm on softmax policies of \citet{Cen}. We will put special emphasis in this section on policy evaluation errors and show how convergence of EPMD depends on this error.

\subsection{Entropy regularized value iteration}
\label{sec:epmdvi}
To ease the discussion, let us first consider an approximate value iteration algorithm. Let $\xik: S\times A\mapsto \R$ be the unnormalized log-probability (which we refer to as {logits} for short) of $\pik$, i.e. $\pik \propto \exp(\xik)$.
We define entropy regularized value iteration by the following two steps. 

\textbf{Evaluation step:} let $q_k: S\times A\mapsto \R$ be a sequence of functions such that $q_0 = 0$ and for all $k\geq 0$, $q_{k+1} = \Ttkp q_k + \epsilon_{k+1}$, where $\Ttkp = T_\tau^{\pikp}$ is the Bellman operator associated to policy $\pikp$ and $\epsilon_{k+1}$ represents the evaluation error due to, e.g., knowing only a sample estimate of the Bellman operator or knowing it only on a sub-set of the state-action space. 

\textbf{Policy update step:} letting $\xik=0$, i.e. the first policy is uniform over the action space, for each $q_k$ we update the policy in EPMD by solving the following optimization problem
\begin{align}
    \forall s\in S, \quad \pikp(s) &=\underset{p\in \Delta(A)}{\arg\max}\{q_k(s) \cdot p - \tau h(p) -\eta \KL(p;\pik(s))\} \label{eq:pup} 
\end{align}
where $\KL(p;p')=p\cdot (\log p - \log p')$ and $\eta > 0$ is the $\KL$ regularization weight. 
This update admits the well known closed-form solution given by
\begin{align}
 \xi_{k+1} =\beta \xi_{k} + \alpha q_k,\label{eq:puplog}
\end{align}
where $ \alpha =\frac{1}{\eta + \tau}$ and $\beta = \frac{\eta}{\eta + \tau}$. Let us characterize the convergence of such an algorithm. In the remainder of this paper, we will be interested in bounding the norm $\ninf{\Qts-\tau\xik}$ which we want as small as possible since the logits of the optimal policy are $\frac{\Qts}{\tau}$. Moreover, from a bound over $\ninf{\Qts-\tau\xik}$ we can derive the more common bound over Q-functions since
\begin{lem}
    Let policy $\pik\propto\exp(\xik)$ and $\Qtk$ its Q-function; then $\ninf{\Qts - \Qtk}\leq \frac{2\ninf{\Qts - \tau \xik}}{1-\gamma}$.\label{lem:xitoq}
\end{lem}
Proofs of for all theoretical statements are given in the appendix. The convergence of entropy regularized value iteration is given by the following theorem
\begin{thm}(Convergence of entropy regularized value iteration)
    Letting $E_j = (1-\beta)\sum_{i=1}^{j}\beta^{j-i}\epsilon_{i}$ and $R_m = \Rx + \gamma \tau \log |A|$, we have at iteration $k+1$ that $\ninf{\Qts - \tau \xikp} \leq \gamma^{k+1} \ninf{\Qts} + R_m \sum_{i=0}^{k}\gamma^i\beta^{k-i} + \sum_{i=0}^{k}\gamma^i\ninf{E_{k-i}}$.\label{thm:vi}
\end{thm}
The first term of the upper bound goes to zero as $k\rightarrow\infty$. This term is also found in unregularized value iteration (see, e.g. Theorem 1.12 of \cite{AgarwalRL}) and is due to the contraction property of the Bellman operators. The second term is a constant multiplied by $\sum_{i=0}^{k}\gamma^i\beta^{k-i}$. It can be shown that this sum satisfies
   $\sum_{i=0}^k\gamma^i \beta^{k-i} \leq \max\{\gamma, \beta\}^k (k+1)$,
which goes to zero as $k\rightarrow\infty$. In the limit of $\eta \rightarrow 0$, where we would drop the $\KL$ regularization, $\beta \rightarrow 0$ and $\sum_{i=0}^k\gamma^i \beta^{k-i} \rightarrow \gamma^k$, yielding an error term that goes to zero at the same rate as $\gamma^{k+1} \ninf{\Qts}$. However, as we increase the $\KL$ regularization, $\beta$ approaches 1 and this second term satisfies whenever $\beta > \gamma$,
    $\sum_{i=0}^k\gamma^i \beta^{k-i} = \beta^k \sum_{i=0}^k\left(\frac{\gamma}{\beta}\right)^i \leq \frac{\beta^{k+1}}{\beta-\gamma}$.
While this term still goes to zero as $k\rightarrow\infty$, by increasing the $\KL$ regularization we pay the price of a slower convergence for $\beta > \gamma$.

Finally, the term $\sum_{i=0}^{k}\gamma^i\ninf{E_{k-i}}$ constitutes the error floor of entropy regularized value iteration stemming from the evaluation errors $\epsilon_i$. This error floor might remain above zero even as $k\rightarrow\infty$. In the limit of $\eta\rightarrow 0$, $\ninf{E_j} \rightarrow \ninf{\epsilon_{j}}$ and the error floor will tend to $\sum_{i=0}^{k}\gamma^i\ninf{\epsilon_{j}}$, i.e. a weighted sum of the norms of the {evaluation errors}. However, as we increase the $\KL$ penalization, there is a hope that the evaluation errors will cancel each other in $(1-\beta)\sum_{i=1}^{j}\beta^{j-i}\epsilon_{i}$, leading to a lower value of $\ninf{E_j}$ than if we would only consider the norm of the last error $\ninf{\epsilon_j}$. As such, by increasing $\eta$ we might slow down the convergence rate but potentially lower the error floor $\sum_{i=0}^{k}\gamma^i\ninf{E_{k-i}}$ and return a better final policy. This result is similar to that of \cite{Vieillard20}, except that our algorithm uses a Bellman operator that only applies entropy regularization (as used for example in the learning of Q-functions in SAC \citep{SAC_app}), whereas \cite{Vieillard20} considered the Bellman operator that applies both an entropy and a $\KL$ regularization. Moreover, the latter work was restricted to the analysis of value iteration, but before discussing StaQ we need first to extend the above analysis to policy iteration as StaQ is a policy iteration algorithm.
\subsection{Entropy regularized policy iteration}
\label{sec:epmdpi}
The policy iteration version of EPMD is quite similar to value iteration except that in the evaluation step, $q_k = \Qtk + \epsilon_k$, where $\Qtk = Q_\tau^{\pik}$ is the Q-function associated to $\pik$. The approximation $q_k$ of $\Qtk$ can be obtained for instance by applying the Bellman operator $\Ttk$ (or a noisy version thereof) several times on $q_{k-1}$---instead of a single time in value iteration. As value and policy iteration algorithms are quite similar, the analysis of the latter follows the same general template, except that the error propagates in slightly more complex ways. In the case of policy iteration, we need a way to relate $q_k$ and $q_{k+1}$ as this relation is not as direct as in value iteration. To do so, we will make use of the policy improvement lemma (Section 4.2 of \cite{SuttonRL}). In the unregularized RL case, the lemma states that a policy greedy w.r.t. a Q-function will have a greater or equal Q-function at every state-action pair. A similar property holds in the entropy regularized case. However, in the presence of policy evaluation errors, the new policy $\pikp$---obtained by maximizing Eq.~\ref{eq:pup}---might increase the probability of sub-par actions and improvement is only guaranteed up to some policy improvement error characterized by the following lemma
\begin{lem}[Approximate policy improvement]
    Let $\mu_{k} = (I-\gamma P\pik)^{-1}$ be the (unnormalized) state distribution associated to policy $\pik$. At any iteration $ k\geq 0$ of entropy regularized policy iteration, we have that $Q_{\tau}^{k+1}\geq Q_{\tau}^{k} - \epik$, with $\epik:=\gamma \mu_{k+1}P(\pik-\pikp)\epsilon_k$. \label{lem:pi}
\end{lem}
The policy improvement error $\epik$ is invariant to a constant shift in the evaluation error $\epsilon_k$. Indeed, we have that for any real value $c$, $(\pik-\pikp)(\epsilon_k+c\mathbf{1}) = (\pik-\pikp)\epsilon_k$, where $\mathbf{1}$ is a vector of ones. Additionally, if $\epsilon_k = 0$ or any other constant vector, then policy improvement is guaranteed. However, we might not improve over the previous Q-function if we overestimate a bad action or underestimate a good one. The overall convergence of entropy regularized policy iteration is characterized by the following theorem
\begin{thm}(Convergence of entropy regularized policy iteration)
    Letting $\epsilon_{-1} = 0$ and $E_j := (1-\beta)\sum_{i=0}^{j}\beta^{j-i}(\epsilon_{i}-\gamma P\pi_i(\epsilon_{i-1} + \epsilon_{\Delta_i}))$, we have at iteration $k+1$ that $\ninf{\Qts - \tau \xikp} \leq \gamma^{k+1} \ninf{\Qts} + \frac{2-\gamma -\beta}{1-\gamma}R_m\sum_{i=0}^{k}\gamma^i\beta^{k-i} + \sum_{i=0}^{k}\gamma^i\ninf{E_{k-i}}$.\label{thm:pi}
\end{thm}
As can be seen, the upper bound of Theorem~\ref{thm:pi} follows a very similar structure to that of value iteration, with the main difference being in the error floor $\sum_{i=0}^{k}\gamma^i\ninf{E_{k-i}}$ that now notably depends on the policy improvement error discussed above. While this error floor involves more quantities, the general scheme remains the same and one hopes that there are values of $\eta$ such that a cancellation of terms leads to a lower error floor compared to the unregularized case while not slowing policy iteration too much. This analysis improves over that of \cite{PMD}, that only considered a uniform worst case error, leading to a less interesting upper bound where the smallest error floor---and the fastest convergence rate---is always obtained by choosing $\eta=0$, using no $\KL$ regularization.

\subsection{Approximate policy update}
\label{sec:prelim:approxpolup}
We have analyzed so far the convergence of EPMD algorithms, considering only evaluation errors. In practice, the policy update that consists in solving Eq.~\ref{eq:pup} might prove challenging without approximations. Indeed, while this policy update leads to a closed form solution in the space of policy logits (Eq.~\ref{eq:puplog}), it might not be possible to implement exactly if the state-action space is too large. In this case, one would typically use a second function approximator for the policy. Such actor-critic approaches to PMD include the likes of TRPO \citep{TRPO}, ECPO \cite{ECPO} and MDPO \citep{tomar2022mirror} that are discussed in more details in App.~\ref{sec:app:acpmd}.

Irrespective of the AC method used, representing the policy logits with a function approximator is likely to introduce a new type of error. In \cite{PMD}, the authors analyzed an approximate EPMD scheme such that the policy update objective in Eq.~\ref{eq:pup} is optimized up to some error $\epsilon_{\text{opt}}$ for all states and all iterations. They showed that the resulting algorithm would converge at the same rate as its exact counterpart but would reach an error floor that depends on $\epsilon_{\text{opt}}$, independently of the existence of policy evaluation errors. In this paper, we investigate an alternative policy update, that truncates the infinite sum of EPMD to result in a practical algorithm that does not introduce errors from the policy update yet keeps the appealing property of averaging policy evaluation errors.

\section{Finite-memory policy mirror descent}
\label{sec:fmdpg}
Let us now consider a PMD-like algorithm that keeps in memory at most $M$---with $M$ being a finite and strictly positive integer---Q-function estimates. The policy at iteration $k$ is now given by Eq.~\ref{eq:intro:staq}, which can be written as a recursive update in the logits space in the following way
\begin{align}
    \xi_{k+1} =\beta \xi_{k} + \alpha q_k +  \frac{\alpha \beta^M}{1-\beta^M} (q_k - q_{k-M}),\label{eq:ximem2}
\end{align}
where $q_{k-M} := 0$ whenever $k-M<0$, and $q_k = \Qtk + \epsilon_k$ otherwise. In contrast to vanilla EPMD in Eq.~\ref{eq:puplog}, we now `delete' at each update the oldest Q-function estimate $q_{k-M}$ and also slightly overweight the most recent Q-function estimate to ensure that the Q-function weights sum to 1. This weight correction in Eq.~\ref{eq:intro:staq}---the extra multiplication by $\frac{1}{1-\beta^M}$ compared to the vanilla sum in Eq.~\ref{eq:pol}---is important as otherwise, the logits might never converge to $\frac{\Qts}{\tau}$ even when the last $M$ Q-function estimates are all equal to $\Qts$. Indeed, without the weight correction and since $\tau\alpha = 1-\beta$, we would have $\tau\xik =  (1-\beta)\sum_{i=0}^{M-1}\beta^i\Qts = ({1-\beta^M}) \Qts$.

The logits update in Eq.~\ref{eq:ximem2} can be interpreted as the result of the following optimization problem
\begin{equation}
    \forall s\in S, \pikp(s) =\underset{p\in \Delta(A)}{\arg\max}\ p\cdot[q_k+\frac{\beta^M}{1-\beta^M} (q_k - q_{k-M})](s) - \tau h(p) -\eta \KL(p;\pik(s)) \label{eq:fmpup} 
\end{equation}
Now instead of maximizing the latest Q-function estimate, the policy also maximizes the difference between the latest and oldest estimate out of the last $M$ Q-functions. This introduces an additional source of policy improvement error, but in our theoretical analysis, we show that for a finite but large enough $M$, this error will vanish as $k\rightarrow\infty$, leaving us with an error floor that only depends on the evaluation errors. Specifically, the algorithm given by Eq.~\ref{eq:ximem2} has the following convergence properties
\begin{thm}[Convergence of finite memory EPMD]
    Let $M$ such that $M > \log \frac{(1-\gamma)^3}{(1+\gamma)(1-\gamma)^2 + 4(\gamma+\gamma^2)}/\log{\beta}$, $\gamma_M = \frac{\gamma}{1-\beta^M}$, $c =  \frac{\beta^M}{1-\beta^M} \left(\frac{4(\gamma+\gamma^2)}{(1-\gamma)^2} + \gamma\right)$, let $d_0$ be the unique root of $d^{2M+1} - \gamma_M d^{2M} - c$ in the interval $(\gamma_M, 1)$, define the matrix $A_{k+1} = \gamma P \pikp (I+ \gamma \mu_{k+1}P(\pikp-\pik))$, and error terms $E_j =  \frac{1-\beta}{1-\beta^M}\sum_{i=0}^{M-1}\beta^{i}(\epsilon_{k-i}- A_{k-i} \epsilon_{k-i-1})$, and $T_k = \ninf{E_k} + \frac{\beta^M(1-\beta)}{(1-\beta^M)^2}\ninf{\sum_{i=0}^{M-1}\beta^{i}A_{k-i}(\epsilon_{k-i-1} - \epsilon_{k-i-1-M})}$ and worst error term $\bar{T} = \underset{0\leq i\leq k}{\max} T_i$, then $\ninf{\Qts-\xikp}\leq d_0^{k+1}\ninf{\Qts} + \sum_{i=0}^{k} \gamma_M^{i} \left(T_{k-i} + c \frac{\bar{T}}{1-\gamma_M-c}\right)$.\label{thm:fmpi}
\end{thm}
In finite memory EPMD we no longer have an explicit expression for the convergence rate, but provided $M$ is large enough, we know that it exists as the unique root of a function in the range $(\gamma_M, 1)$. Moreover, as $M\rightarrow\infty$, the convergence rate goes to $\gamma$ since in this case: $\gamma_M\rightarrow \gamma$, $c\rightarrow 0$ and $\gamma$ becomes quite clearly the unique non-zero solution to $d^{2M+1} - \gamma d^{2M} = 0$ for unknown $d$. As in exact EPMD, we note that in the absence of evaluation errors, the algorithm converges to the optimal policy. In terms of error floor, we find a similar expression of $E_j$ as in Sec.~\ref{sec:epmdpi} up to the truncation to the latest $M$ errors. However, the error floor does not depend only on $E_j$ but on an additional term in $T_j$ that depends on older evaluation error terms. This additional term is weighted by $\frac{\beta^M(1-\beta)}{(1-\beta^M)^2}$ which decreases exponentially fast towards zero as we increase $M$. Finally, the error floor is not just a weighted sum of average evaluation errors but also depends on the worst average evaluation error $\bar{T}$. Fortunately, this term is weighted again by a constant that decreases exponentially fast towards zero as $M$ increases. Thus the theoretical analysis indicates in terms of averaging of policy evaluation errors, that we approach the desirable behavior of an exact EPMD policy update exponentially fast by increasing $M$.

The rest of the paper explores the practical implications of this result: what are the tangible benefits of EPMD, and how fast (in terms of memory size $M$) can a finite memory variant approach the behavior of EPMD with an exact policy update? We first need an efficient implementation of EPMD.

\subsection{Practical implementation}
\label{sec:stack}
\begin{figure}
\begin{minipage}{0.56\textwidth}
    \centering
    \includegraphics[width=\linewidth]{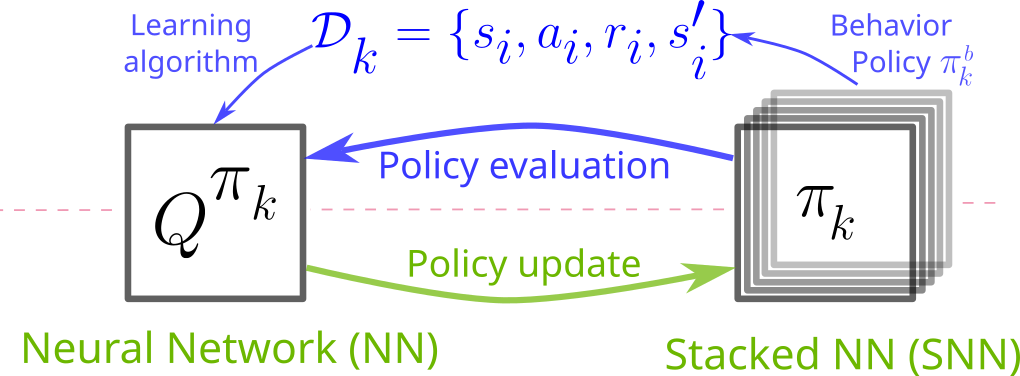}
\end{minipage}\hfill
\begin{minipage}{0.4\textwidth}
    \small
    \centering
    \begin{tabular}{lr}
    \toprule
    Algorithm & Runtime (hr) \\
    \midrule
    StaQ M=1                  & 9.6 \\
    StaQ M=100                & 10.4 \\
    StaQ M=300                & 10.5 \\
    \hline
    AC-NoKL (SAC)        & 13.9 \\
    AC-MProj (ECPO)      & 13.1 \\
    AC-DirectOpt (MDPO)  & 13.2 \\
    \hline
    DQN & 5.8 \\
    \hline
    \end{tabular}
\end{minipage}
\caption{\textbf{Left:} Overview of StaQ, showing the continual training of a Q-function (left), from which we periodically ``stack'' frozen weight snapshots to form the policy (right). At each iteration $k$, we perform: i) Policy evaluation, generating a dataset ${\cal D}_k$ of transitions that are gathered by a behavior policy $\pi_k^b$, typically derived from $\pi_k$, and then learn $Q^{\pi_k}$ from ${\cal D}_k$; ii) Policy update, performed by ``stacking'' the NN of $Q^{\pi_k}$ into the current policy. The policy update is optimization-free and theoretically grounded (Sec.~\ref{sec:fmdpg}), thus only the choice of $\pi_k^b$ and the policy evaluation algorithm can remain sources of instabilities in this deep RL setting. \textbf{Right:} Runtime of StaQ and Actor-Critic baselines on MinAtar/SpaceInvaders-v1 (NVIDIA Tesla V100 GPU). Increasing $M$ has a marginal effect on run time, and StaQ is faster than AC methods due to the optimization-free policy update. DQN is $\sim$65\% faster than $M=1$ StaQ, as it only uses one target Q-function, rather than two.}\label{fig:overview}
\end{figure}
We implement an efficient version of the policy in Eq.~\ref{eq:intro:staq} using stacked neural networks (illustrated in Fig.~\ref{fig:overview}). By using batched operations we make efficient use of GPUs and compute multiple Q-values in parallel. We call the resulting algorithm StaQ\footnote{The code is available at~\href{https://github.com/alexdavey/StaQ}{github.com/alexdavey/StaQ}.}. After each policy evaluation, we push the weights corresponding to this new Q-function onto the stack. If the stacked NN contains more than $M$ NNs, the oldest NN is deleted in a ``first in first out'' fashion. If implementing exact EPMD, then we never delete older Q-functions.

To further reduce the impact of a large $M$, we pre-compute $\xik$ for all entries in the replay buffer\footnote{Since we use small replay buffer sizes of 50K transitions, we are likely to process each transition multiple times (25.6 times in expectation in our experiments) making this optimization worthwhile.} at the start of policy evaluation. The logits $\xik$ are used to sample on-policy actions when computing the targets for $\Qtk$. As a result of the pre-computation, during policy evaluation, forward and backward passes only operate on the current Q-function and hence the impact of large $M$ is minimized. However rolling out the current behavioural policy $\pi_k^b$ still requires a full forward pass. Conversely, the policy update consists only of adding the new weights to the stack, and thus, is optimization free and (almost) instantaneous. The table in Fig~\ref{fig:overview} shows that varying $M$ has little impact on the runtime of StaQ on GPU (requiring at most 3GB of GPU memory), at least for medium-sized environments such as MinAtar \citep{Minatar}. StaQ is faster than the Actor-Critic baselines (described in App.~\ref{sec:app:acpmd}) as no gradient steps are required for the policy update, and within the same order of magnitude as fast Deep RL baselines such as DQN -- the speed difference being due to the use of two target Q-functions. Further implementation details are provided in App.~\ref{app:hyperparams}.

\section{Experiments}
\label{sec:xp}
\textbf{Environments.} For the main experiments we consider the 5 MinAtar environments~\citep{Minatar}. We supplement this, in the Appendix, with results for 4 Classic Control tasks from Gymnasium~\citep{gym}, covering the full suite of environments suggested by~\citet{RevRainbow} for comparing deep RL algorithms with finite action spaces.

\textbf{Algorithms.}
In our main set of experiments, we compare finite memory EPMD for several values of $M$ on up to 5 million timesteps. We notably consider the two extremes of $M=1$, using no $\KL$ regularization and $M=1000$ that never deletes a Q-function within the 5 million timesteps window~(labeled \texttt{Exact~PMD} in figures). We further add several baselines below, based on policy updates common in the literature, see App.~\ref{app:connection_baselines} for more details. The policy evaluation strategy, and all other hyperparameters are kept the same as StaQ, such that they only differ by their policy update.

\texttt{NatGrad~+~LS~(TRPO)} uses the conjugate gradient approach of TRPO~\citep{TRPO}, with line search. We also add several actor-critic methods, including \texttt{AC-DirectOpt (SAC)}, a discrete-actions version of SAC~\citep{SAC_app}. Note that in the discrete-actions setting we can sample directly from the policy and so are not required to introduce an actor --- the $M=1$ limit of StaQ can be seen as a discrete-actions version of SAC (see App.~\ref{app:notsac}).

Two further actor-critic algorithms which optimize the PMD objective are: i) ECPO~(\texttt{AC-MProj}, \cite{ECPO}) which minimizes the M-Projection between the actor and the recursive PMD solution $\pi_{k+1}\propto\pi_k^\beta \exp\left({\alpha\Qtk}\right)$, argued to better preserve the support of the distribution and prevent the premature elimination of actions, and ii) MDPO (\texttt{AC-DirectOpt}, ~\cite{tomar2022mirror}, which skips the closed form solution, and directly optimizes a state averaged version of the entropy regularized PMD update by gradient ascent.

\textbf{Deep RL baselines.} In the appendix, we complement our main experiment with comparisons to deep RL baselines such as the value iteration algorithm DQN~\citep{dqn} and its entropy-regularized variant M-DQN~\citep{MDQN}, the policy gradient algorithm TRPO~\citep{TRPO} and PPO~\citep{PPO}. %
These baselines differ more widely and in ways orthogonal (policy evaluation, exploration, replay buffer management) to the main focus of this paper which is the policy update of EPMD, and thus are only provided for reference.

\begin{wrapfigure}{r}{0.42\textwidth}
    \centering
    \vspace{-1\baselineskip} 
    \includegraphics[width=\linewidth]{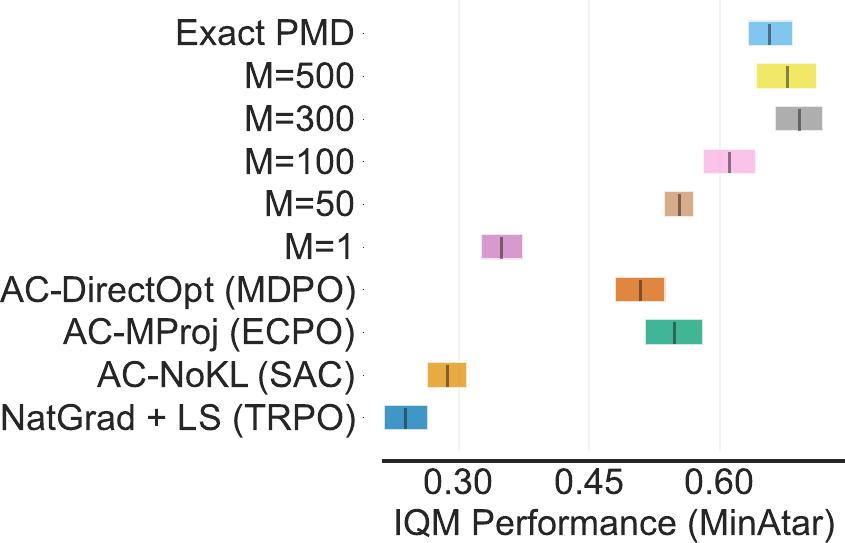}  
    \caption{Normalized policy performance aggregated across MinAtar tasks, showing the interquantile mean and 95\% confidence intervals. See App.~\ref{app:experimental_results} for further results and Deep RL baselines.}
    \label{fig:mainxp_m}
    \vspace{-1\baselineskip} 
\end{wrapfigure}
\textbf{Results.} Across 20 seeds per algorithm and task, aggregate performance improves almost monotonically with $M$ (Fig.~\ref{fig:mainxp_m}). On nearly every task, sufficiently large $M$ matches exact EPMD, with $M\ge300$ virtually indistinguishable on all tasks (App.~\ref{app:Mablation}). This reinforces the theoretical insights that StaQ can match the behavior of exact EPMD.

Compared to the natural policy gradient and other actor-critic PMD baselines, performance is generally improved across all tasks, indicating that StaQ is potentially a better alternative to other actor-critic based PMD schemes, at least on environments where Q-function computations can be efficiently batched, because of the improved performance and runtime. In the appendix, we provide additional experimental results, including comparisons to deep RL baselines in Fig.~\ref{fig:xp_returns}. In App.~\ref{sec:app:shinrl}, we use the ShinRL library~\citep{shinrl} to investigate the cancellation of evaluation errors on environments where Q-functions can be computed exactly, and show that it can occur even on simple environments.

\section{Conclusion}
\label{sec:future}
In this paper, we proposed a policy update rule based on policy mirror descent, that keeps in memory at most $M$ Q-functions. By increasing $M$ we can quickly mimic exact EPMD both from a theoretical and empirical perspective. The resulting policy update has a solid theoretical foundation and clear empirical benefits, making it a valid alternative to existing EPMD schemes, at least for medium-sized tasks. Due to its exact policy update, and the absence of a gap between the theoretical algorithm and the practical implementation, StaQ provides a promising setting for testing other components of RL such as policy evaluation, for instance, the recent methods that use normalization techniques to reduce policy evaluation error~\citep{gallici2025, bhatt2024}. 

Surprisingly, even when $M$ is large, the final computational burden on many common RL tasks is small, due to the stacking of the Q-functions which parallelize well on modern hardware. For large-scale tasks such as Atari~\citep{Atari}, the memory usage and inference time may become significant, however future algorithmic improvements such as compressing old Q-functions could yield a computationally efficient deep RL algorithm, at the cost of an inexact policy update.

\section*{Acknowledgments}
A. Davey and B. Driss were funded by the project ANR-23-CE23-0006. This work was granted access to the HPC resources of IDRIS under the allocation 2024-AD011015599 made by GENCI.

\clearpage



\bibliography{bibliography}
\bibliographystyle{rlj}

\beginSupplementaryMaterials


\section{Proofs}
\label{sec:proofs}
This section includes proofs of the lemmas and theorems of the main paper. 
\subsection{Properties of entropy regularized Bellman operators}
We first start with a reminder of some basic properties of the (entropy regularized) Bellman operators, as presented in \citep{Geist19,PMD}. Within the MDP setting defined in Sec.~\ref{sec:prelim}, let $ T_{\tau}^{\pi}$ be the operator defined for any map $f:S\times A\mapsto \mathbb{R}$ by 
\begin{align}
\left( T_{\tau}^{\pi} f\right)(s,a) =R(s,a) +\gamma \mathbb{E}_{s',a'}[f(s',a') -\tau h(\pi(s'))] ,\label{eq:Tpidef}
\end{align}
This operator has the following three properties.
\begin{prop}[Contraction]
    $T_{\tau}^{\pi}$ is a $\gamma$-contraction w.r.t. the $\ninf{.}$ norm, i.e. $\ninf{T_{\tau}^{\pi} f - T_{\tau}^{\pi} g} \leq \gamma \ninf{f - g}$ for any real functions $f$ and $g$ of $S\times A$.
\end{prop}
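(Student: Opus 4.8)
The plan is to compute the difference $T_{\tau}^{\pi} f - T_{\tau}^{\pi} g$ directly and observe that every term not involving the input map cancels. Reading off the operator from Eq.~\ref{eq:Tpidef}, both the reward $R(s,a)$ and the entropy bonus $-\tau h(\pi(s'))$ are independent of whether we feed in $f$ or $g$: the reward is fixed by the MDP and the entropy term depends only on the (fixed) policy $\pi$. Hence they disappear upon subtraction, leaving
\begin{equation}
(T_{\tau}^{\pi} f - T_{\tau}^{\pi} g)(s,a) = \gamma\, \E_{s',a'}\left[f(s',a') - g(s',a')\right],
\end{equation}
where the expectation is taken over $s' \sim P(\cdot\mid s,a)$ and $a' \sim \pi(\cdot\mid s')$.

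Next I would bound the right-hand side pointwise in $(s,a)$. Taking absolute values and using that the magnitude of an expectation is at most the expectation of the magnitude (Jensen's inequality), then bounding the integrand uniformly by the sup norm, gives
\begin{equation}
\left|(T_{\tau}^{\pi} f - T_{\tau}^{\pi} g)(s,a)\right| \leq \gamma\, \E_{s',a'}\left[\left|f(s',a') - g(s',a')\right|\right] \leq \gamma\, \ninf{f-g},
\end{equation}
since $\left|f(s',a') - g(s',a')\right| \leq \ninf{f-g}$ for every $(s',a')$, and the expectation of this constant bound is just the bound itself.

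Finally, taking the supremum over all $(s,a) \in S\times A$ on the left-hand side yields $\ninf{T_{\tau}^{\pi} f - T_{\tau}^{\pi} g} \leq \gamma\, \ninf{f-g}$, the claimed $\gamma$-contraction. I do not expect any genuine obstacle here: this is the standard contraction argument, and the only point that requires a moment of care is recognizing that the entropy regularizer $-\tau h(\pi(s'))$ is a function of the fixed policy alone, so it cancels cleanly in the difference and the discount factor $\gamma$ by itself controls the contraction, exactly as in the unregularized case.
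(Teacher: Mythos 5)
Your proof is correct. The paper does not actually prove this proposition---it is recalled as a standard property of the regularized Bellman operator with a citation to prior work---and your argument is precisely the standard one that fills this in: since $R(s,a)$ and the entropy bonus $-\tau h(\pi(s'))$ do not depend on the function being operated on, they cancel in the difference, leaving $\gamma$ times an averaging over $(s',a')$, which is a sup-norm non-expansion; the pointwise bound and the final supremum over $(s,a)$ are handled exactly as they should be.
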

\begin{prop}[Fixed point]
\label{prop:softfp}
    $Q_{\tau}^\pi$ is the unique fixed point of the operator $T_{\tau}^{\pi}$, i.e. $T_{\tau}^{\pi} Q_{\tau}^\pi = Q_{\tau}^\pi$.
\end{prop}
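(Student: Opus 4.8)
The plan is to split the statement into two claims: that $\Qtp$ is a fixed point, and that it is the unique one. For the fixed-point claim, I would observe that the required identity $T_{\tau}^{\pi}\Qtp = \Qtp$ is exactly the entropy-regularized Bellman equation recorded in Sec.~\ref{sec:prelim}. Concretely, evaluating the operator definition in Eq.~\ref{eq:Tpidef} at $f = \Qtp$ gives $(T_{\tau}^{\pi}\Qtp)(s,a) = R(s,a) + \gamma\,\mathbb{E}_{s',a'}[\Qtp(s',a') - \tau h(\pi(s'))]$, and the right-hand side equals $\Qtp(s,a)$ by that Bellman equation. If I wanted to be self-contained rather than citing the preliminaries, I would derive this by unrolling the series definition of $\Vtp$: peeling off the $t=0$ term of the sum defining $\Vtp(s)$ yields $\Vtp(s) = \mathbb{E}_a[\Qtp(s,a)] - \tau h(\pi(s))$, and substituting this into $\Qtp(s,a) = R(s,a) + \gamma\mathbb{E}_{s'}[\Vtp(s')]$ reproduces the Bellman recursion.

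For uniqueness, I would invoke the contraction property from the preceding proposition. Suppose $f$ and $g$ are two fixed points of $T_{\tau}^{\pi}$ on the space of bounded maps $S\times A\mapsto\mathbb{R}$. Then $\ninf{f - g} = \ninf{T_{\tau}^{\pi}f - T_{\tau}^{\pi}g} \le \gamma\ninf{f - g}$, so $(1-\gamma)\ninf{f-g}\le 0$; since $\gamma < 1$ this forces $\ninf{f-g} = 0$, that is $f = g$. Equivalently, since $S$ and $A$ are finite the relevant function space is a finite-dimensional, hence complete, normed space, so the Banach fixed-point theorem applies to the $\gamma$-contraction $T_{\tau}^{\pi}$ and yields existence and uniqueness of a fixed point in one stroke; combined with the first paragraph, that fixed point must be $\Qtp$.

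I do not expect a genuine obstacle here: the fixed-point half is a restatement of an already-established Bellman identity, and the uniqueness half is the routine contraction argument. The only point requiring a little care is making sure the Bellman equation is legitimately available, i.e.\ that the infinite-horizon series defining $\Vtp$ (and hence $\Qtp$) converges and may be manipulated termwise. This holds because $R$ is bounded, the per-step negative entropy is bounded on the finite simplex $\Delta(A)$, and $\gamma<1$ guarantees absolute convergence; this is also precisely what underlies the finite bound $\bR$ on $\ninf{\Qtp}$ stated in Sec.~\ref{sec:prelim}.
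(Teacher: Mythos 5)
Your proof is correct, and it is exactly the standard argument: the fixed-point identity is a restatement of the entropy-regularized Bellman equation from Sec.~\ref{sec:prelim}, and uniqueness follows from the $\gamma$-contraction property together with $\gamma<1$ (equivalently, Banach's fixed-point theorem on the complete space of bounded functions over the finite set $S\times A$). Note that the paper itself gives no proof of this proposition --- it is stated as a reminder of basic properties imported from \citet{Geist19} --- so your write-up simply supplies the routine details that the paper delegates to that reference, and it does so correctly, including the care taken about absolute convergence of the series defining $\Vtp$, which justifies the term-peeling step.
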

Let $f$, $g$ be two real functions of $S\times A$. We say that $f\geq g$ iff $f(s,a)\geq g(s,a)$ for all $(s,a)\in S\times A$.
\begin{prop}[Monotonicity]
\label{prop:softmon}
$T_{\tau}^{\pi}$ is monotonous, i.e. if $ f\geq g$ then $T_{\tau }^{\pi } f\geq T_{\tau}^{\pi} g$.    
\end{prop}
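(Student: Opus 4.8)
The plan is to compute the difference $(T_\tau^\pi f)(s,a) - (T_\tau^\pi g)(s,a)$ directly from the definition in Eq.~\ref{eq:Tpidef} and to observe that every term not involving the transformed function cancels. Writing both operators out, the reward term $R(s,a)$ appears identically in each, and the entropy correction $-\tau h(\pi(s'))$ depends only on the policy $\pi$ and the sampled next state $s'$, not on the argument of the operator, so it cancels as well. What remains is
\begin{align}
(T_\tau^\pi f)(s,a) - (T_\tau^\pi g)(s,a) = \gamma\, \mathbb{E}_{s',a'}\left[f(s',a') - g(s',a')\right],
\end{align}
where the expectation is taken over $s'$ drawn from the transition kernel $P(\cdot\mid s,a)$ and $a'$ drawn from $\pi(s')$.

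From here the conclusion is immediate. By hypothesis $f\geq g$ pointwise on $S\times A$, so the integrand $f(s',a') - g(s',a')$ is nonnegative for every $(s',a')$. Since the expectation is against a fixed probability measure (the transition kernel composed with $\pi$) and $\gamma\geq 0$, the right-hand side is a nonnegative scalar for every $(s,a)$. Hence $(T_\tau^\pi f)(s,a)\geq (T_\tau^\pi g)(s,a)$ for all $(s,a)\in S\times A$, which is precisely the statement $T_\tau^\pi f\geq T_\tau^\pi g$.

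I expect no genuine obstacle here: the result is a one-line consequence of the linearity and monotonicity of expectation, together with the fact that $T_\tau^\pi$ is affine in its argument. The only point requiring a moment's care is recognizing that the entropy bonus $-\tau h(\pi(s'))$ is independent of the function being transformed and therefore drops out of the comparison; once that cancellation is noted, monotonicity follows exactly as it does for the unregularized Bellman operator.
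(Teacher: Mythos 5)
Your proof is correct: the entropy term $-\tau h(\pi(s'))$ and the reward $R(s,a)$ indeed cancel in the difference, and monotonicity of expectation does the rest. The paper itself states this proposition without proof (citing \citet{Geist19}), and your one-line argument is exactly the standard one that reference relies on, so there is nothing to add.
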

Let the Bellman optimality $\Tts$ operator be defined by
\begin{align}
\left( \Tts f\right)(s,a) = R(s,a) + \gamma \mathbb{E}_{s'}\left[\max_{p\in\Delta(A)}f(s')\cdot p -\tau h(p)\right].
\end{align}
For the Bellman optimality operator we need the following two properties. 
\begin{prop}[Contraction]
    $\Tts$ is a $\gamma$-contraction w.r.t. the $\ninf{.}$ norm, i.e. $\ninf{\Tts f - \Tts g} \leq \gamma \ninf{f - g}$ for any real functions $f$ and $g$ of $S\times A$.
\end{prop}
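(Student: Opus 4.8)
The plan is to reduce the claim to the non-expansiveness of the entropy-regularized soft-maximization, after which the factor $\gamma$ comes for free from the discounted expectation over next states. First I would observe that in the difference $(\Tts f)(s,a) - (\Tts g)(s,a)$ the reward term $R(s,a)$ cancels exactly, leaving
\begin{align}
(\Tts f)(s,a) - (\Tts g)(s,a) = \gamma\, \mathbb{E}_{s'}\left[\phi(f)(s') - \phi(g)(s')\right],
\end{align}
where I write $\phi(f)(s') := \max_{p\in\Delta(A)}\{f(s')\cdot p - \tau h(p)\}$ for the regularized soft-max value. It therefore suffices to establish the pointwise bound $|\phi(f)(s') - \phi(g)(s')| \leq \ninf{f-g}$, uniformly in $s'$.

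For that bound I would use the standard \emph{evaluate at the other maximizer} argument. Let $p^\star$ be a maximizer defining $\phi(f)(s')$ (unique here by strict concavity of $-\tau h$ for $\tau>0$, though uniqueness is not needed). Since $p^\star$ is feasible for the optimization defining $\phi(g)(s')$, we have $\phi(g)(s') \geq g(s')\cdot p^\star - \tau h(p^\star)$, and subtracting gives
\begin{align}
\phi(f)(s') - \phi(g)(s') \leq (f(s') - g(s'))\cdot p^\star \leq \ninf{f-g},
\end{align}
where the last step uses that $p^\star \in \Delta(A)$ is a probability vector, so the inner product is a convex combination of the entries of $f(s')-g(s')$ and hence bounded by their maximum. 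Swapping the roles of $f$ and $g$ yields the reverse inequality, establishing $|\phi(f)(s')-\phi(g)(s')|\leq \ninf{f-g}$.

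Finally I would push the pointwise bound through the expectation and supremum: taking absolute values, using $|\mathbb{E}_{s'}[\,\cdot\,]|\leq \mathbb{E}_{s'}[|\cdot|]$, and bounding the integrand by $\ninf{f-g}$ gives $|(\Tts f)(s,a)-(\Tts g)(s,a)|\leq \gamma\ninf{f-g}$ for every $(s,a)$; taking the supremum over $(s,a)$ completes the proof. I expect the only nontrivial step to be the non-expansiveness of $\phi$; once the maximizer trick is in place, the rest is bookkeeping. A minor point worth flagging is that the $-\tau h(p)$ term is identical in both optimizations, so it cancels in the maximizer argument and never needs to be controlled --- only the linear part $f\cdot p$ versus $g\cdot p$ matters, which is precisely what makes the soft-max as non-expansive as the ordinary max.
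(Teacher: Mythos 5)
Your proof is correct and complete. Note that the paper itself gives no proof of this proposition: it is stated in the appendix as a recalled property of the entropy-regularized Bellman optimality operator, citing \citet{Geist19}, so there is no in-paper argument to compare against. Your route --- cancel the reward, reduce the claim to non-expansiveness of the regularized soft-max $\phi(f)(s') = \max_{p\in\Delta(A)}\{f(s')\cdot p - \tau h(p)\}$ via the evaluate-at-the-other-maximizer trick, and push the pointwise bound through the discounted expectation --- is precisely the standard proof of the cited fact, and all the details check out: the maximizer exists because $\Delta(A)$ is compact and the objective is continuous on it, the entropy term cancels so only the linear part needs to be controlled, and the convex-combination bound $(f(s')-g(s'))\cdot p^\star \leq \ninf{f-g}$ holds uniformly in $s'$, which is what the final expectation-and-supremum step requires.
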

\begin{prop}[Optimal fixed point]
    $\Tts$ admits $\Qts$ as a unique fixed point, satisfying $\Tts \Qts = \Qts$.
\end{prop}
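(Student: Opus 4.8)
The plan is to split the claim into two parts: first establish that $\Tts$ has a \emph{unique} fixed point using the contraction property already proven, and then verify by direct computation that $\Qts$ satisfies $\Tts\Qts=\Qts$, so that it must be \emph{the} fixed point.

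For the uniqueness part I would appeal to the Banach fixed-point theorem. Since $S$ and $A$ are finite, the set of real functions on $S\times A$ endowed with $\ninf{\cdot}$ is isometric to $(\R^{|S||A|},\ninf{\cdot})$ and hence a complete metric space. The preceding proposition states that $\Tts$ is a $\gamma$-contraction with $\gamma<1$, so Banach's theorem yields a unique fixed point. It then suffices to exhibit $\Qts$ as a fixed point.

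To show $\Tts\Qts=\Qts$, the key step is to evaluate the inner maximization $\max_{p\in\Delta(A)}\Qts(s')\cdot p - \tau h(p)$ appearing in the definition of $\Tts$. This is exactly the $\eta=0$ instance of the entropy-regularized policy update of Eq.~\ref{eq:pup}, whose closed-form solution (Eq.~\ref{eq:polcform}) is the softmax $p\propto\exp(\Qts(s')/\tau)$, i.e.\ $\pis(s')$ as characterized in Sec.~\ref{sec:prelim}. Using the identity $\Vtp(s')=\Qtp(s')\cdot\pi(s') - \tau h(\pi(s'))$ at $\pi=\pis$, together with the fact that $Q_\tau^{\pis}=\Qts$, the value of this maximum equals $\Qts(s')\cdot\pis(s') - \tau h(\pis(s')) = V_\tau^{\pis}(s') = \Vts(s')$. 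Substituting back gives $(\Tts\Qts)(s,a)=R(s,a)+\gamma\mathbb{E}_{s'}[\Vts(s')]$.

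Finally I would close the argument with the Bellman relation $\Qtp(s,a)=R(s,a)+\gamma\mathbb{E}_{s'}[\Vtp(s')]$ from Sec.~\ref{sec:prelim}, evaluated at $\pi=\pis$: since $Q_\tau^{\pis}=\Qts$ and $V_\tau^{\pis}=\Vts$, this reads $\Qts(s,a)=R(s,a)+\gamma\mathbb{E}_{s'}[\Vts(s')]$, matching the expression just derived for $(\Tts\Qts)(s,a)$. Hence $\Tts\Qts=\Qts$, and uniqueness identifies $\Qts$ as the sole fixed point. The main obstacle is the middle step, namely rigorously identifying the maximizer of the entropy-regularized linear objective with $\pis(s')$ and its optimal value with $\Vts(s')$; this rests on the closed-form softmax solution of Eq.~\ref{eq:polcform} and the characterization of $\pis$ recalled in Sec.~\ref{sec:prelim}, after which the remaining steps are routine substitutions into the definitions and Bellman equations.
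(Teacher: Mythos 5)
Your proof is correct. Note, though, that the paper never proves this proposition: it is listed, together with the contraction and monotonicity properties, as a recalled fact from \citet{Geist19}, so there is no in-paper proof to compare against. Your two-step argument --- Banach's fixed-point theorem on the complete space $(\R^{|S||A|}, \ninf{\cdot})$ for existence and uniqueness, then direct verification that $\Qts$ is a fixed point --- is a valid, self-contained derivation relative to the paper's preliminaries: the softmax closed form (Eq.~\ref{eq:polcform} with $\eta=0$), the identity $\Vtp(s)=\E_{a}[\Qtp(s,a)]-\tau h(\pi(s))$, the Bellman equation for $\Qtp$, and the simultaneous optimality of $\pis\propto\exp(\Qts/\tau)$ recalled in Sec.~\ref{sec:prelim} combine exactly as you say. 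The one caveat worth flagging is that this last ingredient --- $Q_\tau^{\pis}=\Qts$ and $V_\tau^{\pis}=\Vts$, which the paper cites from \citet{PMD} --- is, in the literature, usually \emph{derived from} the very fixed-point property you are proving: one typically obtains the fixed point of $\Tts$ via Banach first, and only then identifies it with $\max_\pi \Qtp$ and with the value of its own softmax policy. So your argument is sound as a proof within this paper's framework, where that fact is granted in the preliminaries, but it would become circular if the proposition were meant to be foundational. A from-scratch proof would instead show that the Banach fixed point $Q^\dagger$ dominates $\Qtp$ for every $\pi$ (using monotonicity and contraction) and coincides with the Q-function of its own softmax policy, and conclude $Q^\dagger=\Qts$ from there.
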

Finally, we will make use of the well known property that the softmax distribution is entropy maximizing \citep{Geist19}. Specifically, we know that the policy $\pik\propto\exp(\xik)$  satisfies the following property
\begin{align}
    \text{for all } s\in S, \quad \pik(s) &= \underset{p\in\Delta(A)}{\arg\max}\ \xik(s) \cdot p - h(p).   \label{eq:piboltzmax}
\end{align}
\subsection{Proof of Lemma \ref{lem:xitoq}}
\begin{proof}
We first observe from the definition of $\pik$ that
    \begin{align}
        \Ttk \tau \xik &= R + \gamma P (\pik \tau \xik - \tau h(\pik)),\\
        &\overset{(i)}{=} R + \gamma P (\max_p p \tau \xik - \tau h(p)),\\
        &= \Tts \tau\xik,
    \end{align}
with $(i)$ due to Eq.~\ref{eq:piboltzmax}. Then
    \begin{align}
        \Qtk - \tau \xik &= (\Ttk \Qtk - \Ttk \tau \xik) + (\Ttk \tau \xik - \tau \xik)\\
        &= \gamma P_k (\Qtk - \tau \xik) + (\Ttk \tau \xik - \tau \xik),\\
        &= (I-\gamma P_k)^{-1} (\Ttk \tau \xik - \tau \xik),\\
        &= (I-\gamma P_k)^{-1} (\Tts \tau \xik - \tau \xik),\\
        \Rightarrow \ninf{\Qtk - \tau \xik} &\leq \frac{1}{1-\gamma} \ninf{\Tts \tau \xik - \tau \xik},\\
        &= \frac{1}{1-\gamma} \ninf{\Tts \tau \xik - \Qts + \Qts - \tau \xik},\\
        &\leq \frac{1}{1-\gamma} \left(\ninf{\Tts\Qts - \Tts\tau \xik} + \ninf{\Qts - \tau \xik}\right),\\
        &\leq \frac{1+\gamma}{1-\gamma}  \ninf{\Qts - \tau \xik}.
    \end{align}
Finally, 
\begin{align}
    \ninf{\Qts - \Qtk} &\leq \ninf{\Qts - \tau \xik} + \ninf{\Qtk - \tau \xik},\\
    &\leq \ninf{\Qts - \tau \xik} + \frac{1+\gamma}{1-\gamma}  \ninf{\Qts - \tau \xik} ,\\
    &= \frac{2 \ninf{\Qts - \tau \xik}}{1-\gamma}.
\end{align}

\end{proof}
\subsection{Proof of Theorem \ref{thm:vi}}
\dg{We prove here a more general version of Theorem~\ref{thm:vi}, by taking into account a policy update error at each iteration $k$ given by the function $\epsilon'_k:S\times A\mapsto \R$, defined by $\xikp = \beta \xik + \alpha q_k + \epsilon'_k$, and using the same matrix notation as those of real function of the state-action space introduced in Sec.~\ref{sec:prelim}. To retrieve vanilla Theorem~\ref{thm:vi} as discussed in the main text, one can simply ignore the extra colored terms.}
\begin{proof}
Extracting the value function from the policy update formula, and using the fact that $\tau \alpha = 1 - \beta$, we have
    \begin{align}
        \pikp q_k - \tau h(\pikp) &=  \pikp \left(\frac{1}{1-\beta}(\tau\xikp - \beta \tau \xik)\dg{-\frac{\epsilon'_k}{\alpha}}\right) - \tau h(\pikp),\\
        &= \pikp \left(\frac{1}{1-\beta}(\tau\xikp - \beta \tau \xik)\dg{-\frac{\epsilon'_k}{\alpha}}\right) - \frac{1}{1-\beta}(\tau h(\pikp) - \beta \tau h(\pikp)),\\
        &= \pikp \left(\frac{1}{1-\beta}[\tau\xikp - \tau h(\pikp) - \beta (\tau \xik - \tau h(\pikp))]\right) \dg{- \pikp \frac{\epsilon'_k}{\alpha}},\\ 
        &\overset{(i)}{\geq} \left(\frac{1}{1-\beta}[\pikp\tau\xikp - \tau h(\pikp) - \beta (\pik\tau \xik - \tau h(\pik))]\right) \dg{-\pikp \frac{\epsilon'_k}{\alpha}}, \label{eq:qkdk}
    \end{align}
with $(i)$ due to $\pik\tau \xik - \tau h(\pik) = \max_p p\tau \xik - \tau h(p) \geq \pikp\tau \xikp - \tau h(\pikp)$. Using this inequality in $\qkp$ yields

    \begin{align}
        \qkp &= \Ttkp q_k + \epsilon_{k+1},\\
        &= R + \gamma P(\pikp q_k - \tau h(\pikp)) + \epsilon_{k+1},\\
        \begin{split}            
        &\geq R + \gamma P\Big(\frac{1}{1-\beta}[\pikp\tau\xikp  - \tau h(\pikp) - \\
        &\quad\quad\beta (\pik \tau\xik - \tau h(\pik))]\dg{- \pikp \frac{\epsilon'_k}{\alpha}}\Big) + \epsilon_{k+1},
        \end{split}\\
        \begin{split}
            &= \frac{1}{1-\beta}(R - \beta R) + \gamma P\Big(\frac{1}{1-\beta}[\pikp\tau\xikp  - \tau h(\pikp)- \\
            &\quad\quad\beta (\pik \tau\xik - \tau h(\pik))]\dg{- \pikp \frac{\epsilon'_k}{\alpha}}\Big) + \epsilon_{k+1},
        \end{split}\\
        &= \frac{1}{1-\beta} (\Ttkp \tau \xikp - \beta \Ttk \tau \xik) + \epsilon_{k+1}\dg{- \gamma P_{k+1} \frac{\epsilon'_k}{\alpha}},\\
        &= \frac{1}{1-\beta} (\Tts \tau \xikp - \beta \Tts \tau \xik) + \epsilon_{k+1}\dg{- \gamma P_{k+1} \frac{\epsilon'_k}{\alpha}},
    \end{align}
with \dg{$P_k = P\pikp$}, and where the last step is again due to Eq.~\ref{eq:piboltzmax}. Now using this inequality in the definition of $\xikp$ gives
\begin{align}
    \tau \xikp &= (1-\beta) \sum_{i=0}^{k} \beta^{k-i} q_{i} \dg{ + (1-\beta)\sum_{i=0}^{k} \beta^{k-i} \epsilon'_{i}},\\
    &\overset{(i)}{=} (1-\beta) \sum_{i=1}^{k} \beta^{k-i} q_{i}\dg{ + (1-\beta)\sum_{i=1}^{k} \beta^{k-i} \epsilon'_{i}},\\
        &\geq (1-\beta) \sum_{i=1}^{k} \beta^{k-i} \left( \frac{1}{1-\beta}(\Tts \tau \xi_i - \beta \Tts \tau \xi_{i-1}) + \epsilon_i\right) \dg{\quad \quad  + (1-\beta)\sum_{i=1}^{k} \beta^{k-i} \Big(\epsilon'_{i}-\gamma P_i \frac{\epsilon'_{i-1}}{\alpha}\Big)},\\
    &= \Tts \tau \xik - \beta^{k} \Tts \tau \xi_0 + (1-\beta)\sum_{i=1}^{k}\beta^{k-i}\epsilon_{i} \dg{ + (1-\beta)\sum_{i=1}^{k} \beta^{k-i} \Big(\epsilon'_{i}-\gamma P_i \frac{\epsilon'_{i-1}}{\alpha}\Big)},
\end{align}
with (i) due to $q_0=0$\dg{ (and $\epsilon'_0 = 0$)}. Letting $E_j = (1-\beta)\sum_{i=1}^{j}\beta^{j-i}\epsilon_{i}$, \dg{$E'_j= (1-\beta)\sum_{i=1}^{j} \beta^{j-i} \Big(\epsilon'_{i}-\gamma P_i \frac{\epsilon'_{i-1}}{\alpha}\Big)$}, and $R_m = R_x + \gamma \tau \log |A|$ be an upper bound to $\ninf{\Tts \tau\xi_0}$, we finally obtain
\begin{align}
    \Qts - \tau \xikp &\leq \Qts - \Tts \tau  \xik + \beta^{k} \Tts  \tau\xi_0 - E_k \dg{-E'_k},\\
    \Rightarrow \ninf{\Qts - \tau \xikp} &\leq \ninf{\Qts - \Tts \tau  \xik} + \beta^{k}\ninf{\Tts  \tau \xi_0} + \ninf{E_k},\\
    &\leq \gamma \ninf{\Qts - \tau \xik} + \beta^{k}R_m + \ninf{E_k}\dg{+\ninf{E'_k}},\\
    &\leq \gamma^{k+1} \ninf{\Qts} + R_m\sum_{i=0}^{k}\gamma^i\beta^{k-i} + \sum_{i=0}^{k}\gamma^i\ninf{E_{k-i}}\dg{+\sum_{i=0}^{k}\gamma^i\ninf{E'_{k-i}}}.
\end{align}
if $\beta > \gamma$
\begin{align}
    \sum_{i=0}^k\gamma^i \beta^{k-i} &= \beta^k \sum_{i=0}^k\left(\frac{\gamma}{\beta}\right)^i,\\
    &= \frac{\beta^{k+1} - \gamma^{k+1}}{\beta-\gamma}.
\end{align}
if $\beta < \gamma$
\begin{align}
    \sum_{i=0}^k\gamma^i \beta^{k-i} &= \sum_{i=0}^k\gamma^{k-i} \beta^{i},\\
    &=\frac{\gamma^{k+1} - \beta^{k+1}}{\gamma-\beta}.
\end{align}
if $\beta = \gamma$
\begin{align}
    \sum_{i=0}^k\gamma^i \beta^{k-i} &= \gamma^k (k+1).
\end{align}
In all cases
\begin{align}
    \sum_{i=0}^k\gamma^i \beta^{k-i} &\leq \max\{\gamma, \beta\}^k (k+1).
\end{align}
\end{proof}

\subsection{Proof of Lemma \ref{lem:pi}}
\begin{proof}
As $\pikp$ maximizes the policy update Eq.~\ref{eq:pup}, and from the non-negativity of the $\KL$ and the fact that $\KL(\pik;\pik)=0$ we have
    \begin{align}
        \pik q_k - \tau h(\pik) &\leq  \pikp q_k - \tau h(\pikp) - \eta \KL(\pikp; \pik),\\
        &\leq  \pikp q_k - \tau h(\pikp),\\
        \Leftrightarrow \pik \Qtk - \tau h(\pik) &\leq  \pikp \Qtk - \tau h(\pikp) + (\pikp - \pik) \epsilon_k.
    \end{align}
    \begin{align}
        \Qtkp - \Qtk &= \gamma P  (\pikp\Qtkp - \tau h(\pikp)) - \gamma P (\pik\Qtk - \tau h(\pik)),\\
        \begin{split}
        &\overset{(i)}{\geq} \gamma P  (\pikp\Qtkp - \tau h(\pikp)) -\\
        &\quad\quad \gamma P (\pikp \Qtk - \tau h(\pikp) + (\pikp - \pik) \epsilon_k),
        \end{split}\\
        &= \gamma P \pikp (\Qtkp - \Qtk) + \gamma P (\pik - \pikp) \epsilon_k,\\
        \Leftrightarrow (I-\gamma P \pikp)(\Qtkp - \Qtk) &\geq \gamma P (\pik - \pikp) \epsilon_k,\\
        \Leftrightarrow \Qtkp - \Qtk &\overset{(ii)}{\geq} \gamma (I-\gamma P\pikp)^{-1}P (\pik - \pikp) \epsilon_k.
    \end{align}
    where in $(i)$ we have used the fact that $P$ is a probability matrix with only positive entries, and similarly in $(ii)$ for the matrix $(I-\gamma P \pikp)^{-1}=\sum_{i=0}^\infty (\gamma P\pikp)^i$.
\end{proof}
\subsection{Proof of Theorem \ref{thm:pi}}
\begin{proof}
The beginning of the proof is the same as in value iteration and we can show using the same arguments that    
    \begin{align}
        \pikp q_k - \tau h(\pikp) &\geq \left(\frac{1}{1-\beta}[\pikp\tau\xikp - \tau h(\pikp) - \beta (\pik\tau \xik - \tau h(\pik))]\right).
    \end{align}
    Using this inequality in $\qkp$ yields
    \begin{align}
        \qkp &= \Qtkp + \epsilon_{k+1},\\
        &= R + \gamma P(\pikp \Qtkp - \tau h(\pikp)) + \epsilon_{k+1},\\
        &\overset{(i)}{\geq} R + \gamma P(\pikp (q_k - \epsilon_k - \epik) - \tau h(\pikp)) + \epsilon_{k+1},\\
        &\geq R + \gamma P\left(\frac{1}{1-\beta}[\pikp\tau\xikp  - \tau h(\pikp) - \beta (\pik \tau\xik - \tau h(\pik))]\right) + \epsilon_{k+1} - \gamma P_{k+1}(\epsilon_k + \epik),\\
        &= \frac{1}{1-\beta} (\Ttkp \tau \xikp - \beta \Ttk \tau \xik) + \epsilon_{k+1} - \gamma P_{k+1}(\epsilon_k + \epik),\\
        &= \frac{1}{1-\beta} (\Tts \tau \xikp - \beta \Tts \tau \xik) + \epsilon_{k+1} - \gamma P_{k+1}(\epsilon_k + \epik),
    \end{align}
where for $(i)$ we used Lemma~\ref{lem:pi} and the definition of $q_k$. Now using this inequality in the definition of $\xikp$ gives
\begin{align}
    \tau \xikp &= (1-\beta) \sum_{i=0}^{k} \beta^{k-i} q_{i},\\
    &\geq (1-\beta) \beta^k q_0 + (1-\beta) \sum_{i=1}^{k} \beta^{k-i} \left( \frac{1}{1-\beta}(\Tts \tau \xi_i - \beta \Tts \tau \xi_{i-1}) + \epsilon_i - \gamma P_{i}(\epsilon_{i-1} + \epiim)\right),\\
    &= \Tts \tau \xik - \beta^{k} \Tts \xi_0 + (1-\beta)\sum_{i=0}^{k}\beta^{k-i}(\epsilon_{i} - \epsilon'_{i}) + (1-\beta) \beta^k Q_\tau^0,
\end{align}
with $\epsilon'_0 = 0$ and $\forall i > 0: \epsilon'_i = \gamma P_i (I + \gamma P(I-\gamma P_{i})^{-1}(\pi_{i-1}-\pi_i))\epsilon_{i-1}$. Letting $E_j := (1-\beta)\sum_{i=0}^{j}\beta^{j-i}(\epsilon_{i}-\epsilon'_{i})$, $R_m := R_x + \gamma \tau \log |A|$ be an upper bound to $\ninf{\Tts\xi_0}$ and $\bR = \frac{R_m}{1-\gamma}$ be an upper bound to $\ninf{Q_\tau^0}$, we finally obtain
\begin{align}
    \Qts - \tau \xikp &\leq \Qts - \Tts \tau  \xik + \beta^{k} \Tts \xi_0 - E_k + (1-\beta)\beta^k Q_\tau^0,\\
    \Rightarrow \ninf{\Qts - \tau \xikp} &\leq \ninf{\Qts - \Tts \tau  \xik} + \beta^{k}\ninf{\Tts \xi_0} + \ninf{E_k} + (1-\beta)\beta^k \ninf{Q_\tau^0},\\
    &\leq \gamma \ninf{\Qts - \tau \xik} + \beta^{k}R_m + (1-\beta)\beta^k \bR+ \ninf{E_k},\\
    &\leq \gamma^{k+1} \ninf{\Qts} + (R_m + (1-\beta)\bR)\sum_{i=0}^{k}\gamma^i\beta^{k-i} + \sum_{i=0}^{k}\gamma^i\ninf{E_{k-i}},\\
    &= \gamma^{k+1} \ninf{\Qts} + \frac{2-\gamma -\beta}{1-\gamma}R_m\sum_{i=0}^{k}\gamma^i\beta^{k-i} + \sum_{i=0}^{k}\gamma^i\ninf{E_{k-i}}
\end{align}
\end{proof}
\subsection{Proof of Eq.~\eqref{eq:ximem2}}
\begin{proof}
    For $k = 0$,
    \begin{align}
        \xi_{1} &=\beta \times 0 + \alpha q_0 +  \frac{\alpha \beta^M}{1-\beta^M} (q_0 - 0),\\
        &= \alpha\left(1+\frac{\beta^M}{1-\beta^M}\right)q_0,\\
        &= \frac{\alpha}{1-\beta^M}q_0.
    \end{align}
If it is true for $k$, then
\begin{align}
    \xi_{k+1} &=\beta \frac{\alpha}{1-\beta^M}\sum_{i=0}^{M-1}\beta^iq_{k-1-i} + \alpha q_k +  \frac{\alpha \beta^M}{1-\beta^M} (q_k - q_{k-M}),\\
    &= \frac{\alpha}{1-\beta^M}\sum_{i=0}^{M-2}\beta^{i+1}q_{k-1-i} + \frac{\alpha \beta^M}{1-\beta^M} (q_{k-M} -  q_{k-M}) + \frac{\alpha}{1-\beta^M}q_k,\\
    &= \frac{\alpha}{1-\beta^M}\sum_{i=0}^{M-1}\beta^i q_{k-i}
\end{align}
\end{proof}

\subsection{Proof of Theorem \ref{thm:fmpi}}
As with policy iteration, we first need a policy improvement lemma
\begin{lem}[Approximate policy improvement of finite memory EPMD]
    For any $ k\geq 0$, $Q_{\tau}^{k+1}\geq Q_{\tau}^{k} - \gamma (I-\gamma P\pikp)^{-1}P [(\pikp - \pik) (\epsilon_k + \Delta_{k})]$, with $\Delta_{k}:=\frac{\beta^M}{1-\beta^M} (q_k - q_{k-M})$.
    \label{lem:fmpi:pi}
\end{lem}

\begin{proof}
We can see the policy $\pikp$ as the maximizer of Eq.~\eqref{eq:pup} if we would replace $q_k$ with $q_k + \frac{\beta^M}{1-\beta^M} (q_k - q_{k-M})$. From the non-negativity of the $\KL$ and the fact that $\KL(\pik;\pik) = 0$ we have
\begin{align}
\pik q_k - \tau h(\pik) & \leq \pikp q_k - \tau h(\pikp) + \frac{\beta^M}{1-\beta^M} (\pikp - \pik)(q_k - q_{k-M}),\\
\Rightarrow \pik \Qtk - \tau h(\pik) & \leq \pikp \Qtk - \tau h(\pikp) + \frac{\beta^M}{1-\beta^M} (\pikp - \pik)(q_k - q_{k-M}) + (\pikp - \pik)\epsilon_k.
\end{align}
Let $\Delta_{k} := \frac{\beta^M}{1-\beta^M} (q_k - q_{k-M})$. Writing down the (matrix) definition of $\Qtkp$ and $\Qtk$ gives
\begin{align}
    \Qtkp - \Qtk &= \gamma P  (\pikp\Qtkp - \tau h(\pikp)) - \gamma P (\pik\Qtk - \tau h(\pik)),\\
    &\overset{(i)}{\geq} \gamma P  (\pikp\Qtkp - \tau h(\pikp)) - \gamma P (\pikp \Qtk - \tau h(\pikp) + (\pikp - \pik) (\epsilon_k + \Delta_{k})),\\
    &= \gamma P \pikp (\Qtkp - \Qtk) - \gamma P [(\pikp - \pik) (\epsilon_k + \Delta_{k})],\\
        \Leftrightarrow \Qtkp - \Qtk &\overset{(ii)}{\geq} - \gamma (I-\gamma P\pikp)^{-1}P [(\pikp - \pik) (\epsilon_k + \Delta_{k})].
\end{align}
where in $(i)$ we have used the fact that $P$ is a probability matrix with only positive entries, and similarly in $(ii)$ for the matrix $(I-\gamma P \pikp)^{-1}=\sum_{i=0}^\infty (\gamma P\pikp)^i$.
\end{proof}

We are now ready to prove the main theorem
\begin{proof}

The beginning of the proof is similar to vanilla entropy regularized policy/value iteration
    \begin{align}
        \pikp q_k - \tau h(\pikp) &=  \pikp \left(\frac{1}{1-\beta}(\tau\xikp - \beta \tau \xik)\right) - \tau h(\pikp) - \pikp\Delta_{k},\\
        &= \pikp \left(\frac{1}{1-\beta}[\tau\xikp - \tau h(\pikp) - \beta (\tau \xik - \tau h(\pikp))]\right) - \pikp\Delta_{k},\\
        &\overset{(i)}{\geq} \frac{1}{1-\beta}[\pikp\tau\xikp - \tau h(\pikp) - \beta (\pik\tau \xik - \tau h(\pik))] - \pikp\Delta_{k}.
        \label{eq:fmpi:qkdk}
    \end{align}
    Using this inequality in $\qkp$ yields
    \begin{align}
        \qkp &= \Qtkp + \epsilon_{k+1},\\
        &= R + \gamma P(\pikp \Qtkp - \tau h(\pikp)) + \epsilon_{k+1},\\
        &\overset{(i)}{\geq} R + \gamma P(\pikp (q_k - \epsilon_k - \gamma \mu_{k+1} P (\pikp-\pik)(\epsilon_k+\Delta_{k})) - \tau h(\pikp)) + \epsilon_{k+1},\\
    \end{align}
    where for $(i)$ we used Lemma~\ref{lem:pi} and the definition of $q_k$. Using Eq.~\eqref{eq:fmpi:qkdk} on the following terms gives
    \begin{align}
        \begin{split}            
        R + \gamma P(\pikp q_k - \tau h(\pikp)) &\geq R + \gamma P\Big(\frac{1}{1-\beta}[\pikp\tau\xikp  - \tau h(\pikp) - \\
        &\quad\quad \beta (\pik \tau\xik - \tau h(\pik))]- \pikp\Delta_{k}\Big),
        \end{split}\\
        &= \frac{1}{1-\beta} (\Ttkp \tau \xikp - \beta \Ttk \tau \xik) - \gamma P\pikp\Delta_{k},\\
        &= \frac{1}{1-\beta} (\Tts \tau \xikp - \beta \Tts \tau \xik)  - \gamma P\pikp\Delta_{k}.
    \end{align}
    Completing with the rest of the terms finally gives
    \begin{align}
        \qkp &\geq  \frac{1}{1-\beta} (\Tts \tau \xikp - \beta \Tts \tau \xik)  - \gamma P \pikp (I+ \gamma \mu_{k+1}P(\pikp-\pik))(\epsilon_k + \Delta_{k}) + \epsilon_{k+1}.\\
    \end{align}
Let $A_{k+1} = \gamma P \pikp (I+ \gamma \mu_{k+1}P(\pikp-\pik))$ using this inequality in the definition of $\xikp$ gives
\begin{align}
    \tau \xikp &= \frac{1-\beta}{1-\beta^M} \sum_{i=0}^{M-1} \beta^{i} q_{k-i},\\
    &\geq \frac{1-\beta}{1-\beta^M} \sum_{i=0}^{M-1} \beta^{i} \left(\frac{1}{1-\beta}(\Tts \tau \xi_{k-i} - \beta \Tts \tau \xi_{k-i-1}) - A_{k-i}(\epsilon_{k-i-1}+\Delta_{k-i-1}) + \epsilon_{k-i}\right),\\
    &= \frac{1}{1-\beta^M}\Tts \tau \xik - \frac{\beta^{M}}{1-\beta^M} \Tts \xi_{k-M} + \frac{1-\beta}{1-\beta^M}\sum_{i=0}^{M-1}\beta^{i}(\epsilon_{k-i}- A_{k-i}(\epsilon_{k-i-1}+\Delta_{k-i-1})).
\end{align}
Letting $E_j :=  \frac{1-\beta}{1-\beta^M}\sum_{i=0}^{M-1}\beta^{i}(\epsilon_{k-i}- A_{k-i} \epsilon_{k-i-1})$, $\dqk = \frac{1-\beta}{1-\beta^M}\sum_{i=0}^{M-1}\beta^{i}A_{k-i}\Delta_{k-i-1}$, $R_m := R_x + \gamma \tau \log |A|$ be an upper bound to $\ninf{\Tts\xi_0}$ and $\bR = \frac{R_m}{1-\gamma}$ be an upper bound to $\ninf{Q_\tau^0}$, we finally obtain
\begin{align}
    \Qts - \tau \xikp &\leq \Qts - \frac{1}{1-\beta^M}\Tts \tau \xik + \frac{\beta^{M}}{1-\beta^M} \Tts \xi_{k-M} - E_k - \dqk,\\
    \Rightarrow \ninf{\Qts - \tau \xikp} &\leq \frac{\ninf{\Qts - \Tts \tau \xik} + \beta^M \ninf{\Qts - \Tts \tau \xi_{k-M}}}{1-\beta^M}+ \ninf{\dqk} + \ninf{E_k},\\
      &\leq \gamma\frac{\ninf{\Qts - \tau \xik} + \beta^M \ninf{\Qts - \tau \xi_{k-M}}}{1-\beta^M}+ \ninf{\dqk} + \ninf{E_k}
\end{align}
Let us now look into the term $\ninf{\dqk}$, and split it into policy evaluation error and distance to $\Qts$
\begin{align}
    \ninf{\dqk} &= \ninf{\frac{1-\beta}{1-\beta^M}\sum_{i=0}^{M-1}\beta^{i}A_{k-i}\Delta_{k-i-1}},\\
    &= \ninf{\frac{1-\beta}{1-\beta^M}\sum_{i=0}^{M-1}\beta^{i}A_{k-i}\frac{\beta^M}{1-\beta^M} (q_{k-i-1} - q_{k-i-1-M})},\\
    &= \ninf{\frac{\beta^M(1-\beta)}{(1-\beta^M)^2}\sum_{i=0}^{M-1}\beta^{i}A_{k-i}(Q_{k-i-1} - Q_{k-i-1-M}+\epsilon_{k-i-1} - \epsilon_{k-i-1-M})},\\
    \begin{split}
        &\leq \ninf{\frac{\beta^M(1-\beta)}{(1-\beta^M)^2}\sum_{i=0}^{M-1}\beta^{i}A_{k-i}(\epsilon_{k-i-1} - \epsilon_{k-i-1-M})} \\
        &\quad+ \ninf{\frac{\beta^M(1-\beta)}{(1-\beta^M)^2}\sum_{i=0}^{M-1}\beta^{i}A_{k-i}(Q_{k-i-1} - Q_{k-i-1-M})}.
    \end{split}\label{eq:fmpi:split}
\end{align}
    To bound the infinite norm of $A_k$, we note that $(1-\gamma)\mu_k$ is a probability matrix (the state distribution induced by policy $\pik$). Using the sub-additivity of norms and the fact that the multiplication of probability matrices is a probability matrix with infinite norm equal to 1, we have
    \begin{align}
        \ninf{A_k} &= \ninf{\gamma P \pikp \left(I+ \frac{\gamma}{1-\gamma}((1-\gamma)\mu_{k+1})P(\pikp-\pik)\right)},\\
        &\leq \gamma + \frac{2\gamma^2}{1-\gamma}.
    \end{align}
    Looking now at the rightmost inner sum in Eq.~\eqref{eq:fmpi:split} gives
    \begin{align}
        &\ninf{\sum_{i=0}^{M-1}\beta^{i}A_{k-i}(Q_{k-i-1} - Q_{k-i-1-M})} \\
        &\leq {\sum_{i=0}^{M-1}\beta^{i}\ninf{A_{k-i}(Q_{k-i-1} - Q_{k-i-1-M})}} ,\\
        &\leq \left(\gamma + \frac{2\gamma^2}{1-\gamma}\right)\sum_{i=0}^{M-1}\beta^{i}\ninf{Q_{k-i-1} - Q_{k-i-1-M}},\\
        &\leq \left(\gamma + \frac{2\gamma^2}{1-\gamma}\right)\sum_{i=0}^{M-1}\beta^{i}\left(\ninf{\Qts - Q_{k-i-1}} + \ninf{\Qts - Q_{k-i-1-M}}\right),\\
        &\overset{(i)}{\leq} \frac{2}{1-\gamma}\left(\gamma + \frac{2\gamma^2}{1-\gamma}\right)\sum_{i=0}^{M-1}\beta^{i}\left(\ninf{\Qts - \tau\xi_{k-i-1}} + \ninf{\Qts - \tau\xi_{k-i-1-M}}\right),
    \end{align}
    where $(i)$ is due to Lemma~\ref{lem:xitoq}.
    Let $z_k = \ninf{\Qts - \tau\xi_k}$ and $\err_k$ grouping all the error terms
    \begin{align}
        T_k = \ninf{E_k} + \frac{\beta^M(1-\beta)}{(1-\beta^M)^2}\ninf{\sum_{i=0}^{M-1}\beta^{i}A_{k-i}(\epsilon_{k-i-1} - \epsilon_{k-i-1-M})}.
    \end{align}    
    Putting everything together we have
    \begin{align}
        z_{k+1} &\leq \gamma\frac{z_k + \beta^M z_{k-M}}{1-\beta^M} + \frac{\beta^M(1-\beta)}{(1-\beta^M)^2} \frac{2(\gamma+\gamma^2)}{(1-\gamma)^2} \sum_{i=0}^{M-1}\beta^{i}\left(z_{k-i-1} + z_{k-i-1-M}\right) + \err_k.\label{eq:fmpi:zk}
    \end{align} 
    Let us first study the sequence $\{z_k\}$ without policy evaluation errors and try to upper bound it with a simpler sequence. We define the sequence $\{x_k\}$ for all integers $k$ by 
    \begin{align}
        x_k = \ninf{\Qts},\ \text{for all } k \leq 0,
    \end{align}
    and for $k \geq 0$ we let
    \begin{align}
        x_{k+1} = \gamma\frac{x_k + \beta^M x_{k-M}}{1-\beta^M} + \frac{\beta^M(1-\beta)}{(1-\beta^M)^2} \frac{2(\gamma+\gamma^2)}{(1-\gamma)^2} \sum_{i=0}^{M-1}\beta^{i}\left(x_{k-i-1} + x_{k-i-1-M}\right).
    \end{align}
    We first find a condition for which the sequence is strictly decreasing starting from $k \geq 0$. For $k = 0$ we have that
    \begin{align}
        x_1 &= \left(\gamma \frac{1 + \beta^M}{1-\beta^M} + \frac{\beta^M}{1-\beta^M} \frac{4(\gamma+\gamma^2)}{(1-\gamma)^2}\right)x_{0}.
    \end{align}
    This will be strictly decreasing if 
    \begin{align}
        \gamma \frac{1 + \beta^M}{1-\beta^M} + \frac{\beta^M}{1-\beta^M} \frac{4(\gamma+\gamma^2)}{(1-\gamma)^2} &< 1,\\
        \Leftrightarrow \gamma ({1 + \beta^M}) + \beta^M \frac{4(\gamma+\gamma^2)}{(1-\gamma)^2}&< (1-\beta^M),\\
        \Leftrightarrow \left(1 + \gamma + \frac{4(\gamma+\gamma^2)}{(1-\gamma)^2}\right)\beta^M &< 1 - \gamma,\\
        \Leftrightarrow  \log\left(1 + \gamma + \frac{4(\gamma+\gamma^2)}{(1-\gamma)^2}\right) + M\log\beta &< \log(1 - \gamma),\\
        \Leftrightarrow M &> \log \frac{(1-\gamma)^3}{(1+\gamma)(1-\gamma)^2 + 4(\gamma+\gamma^2)}/\log{\beta}.\label{eq:fmpi:mcond}
    \end{align}
    As it is true for $k = 0$ and the sequence is constant for $k < 0$, assume now that the sequence is strictly decreasing from there on, up to some positive index $k$. Then since all the weights of past terms are positive, we can replace all terms by their predecessors and we immediately have that
    \begin{align}
        x_{k+1} &< \gamma\frac{x_{k-1} + \beta^M x_{k-M-1}}{1-\beta^M} + \frac{\beta^M(1-\beta)}{(1-\beta^M)^2} \frac{2(\gamma+\gamma^2)}{(1-\gamma)^2} \sum_{i=0}^{M-1}\beta^{i}\left(x_{k-i-2} + x_{k-i-2-M}\right),\\
        &= x_k.
    \end{align}
    Thus the sequence $\{x_k\}$ is non-increasing for all $k$ if $M$ satisfies the inequality in \eqref{eq:fmpi:mcond}. For such values of $M$ we will now find an upper bounding sequence that has a simpler geometric form.  Letting $c = \frac{\beta^M}{1-\beta^M} \left(\frac{4(\gamma+\gamma^2)}{(1-\gamma)^2} + \gamma\right)$, and since the sequence is non-decreasing, we have for all $k \geq 0$ that     
    \begin{align}
        x_{k+1} &\leq \frac{\gamma}{1-\beta^M} x_k + c x_{k-2M}.\label{eq:fmpi:simprec}
    \end{align}
    Let us now try to find a rate $d\in(0,1)$ such that for all $k$ we have
    \begin{align}
        x_{k} \leq d^{k} x_{0}.
    \end{align}
    For all $k\leq 0$, the above inequality holds for any $d\in(0,1)$.
    Now, if the upper bounding is true up to some index $k$ then using Eq.~\eqref{eq:fmpi:simprec} we have
    \begin{align}
        x_{k+1} &\leq \left(\frac{\gamma}{1-\beta^M} d^{k} + c d^{k-2M}\right)x_0,\\
        &= d^k \left(\frac{\gamma}{1-\beta^M} + c d^{-2M}\right)x_0.
    \end{align}
    The smallest acceptable $d$ would be one such that 
    \begin{align}
        \frac{\gamma}{1-\beta^M} + c d^{-2M} &= d,\\
        \Leftrightarrow d^{2M+1} - \frac{\gamma}{1-\beta^M} d^{2M} - c&= 0.
    \end{align}
    Let $f(d) = d^{2M+1} - \frac{\gamma}{1-\beta^M} d^{2M} - c$, we have that $f(\frac{\gamma}{1-\beta^M}) = -c < 0$ and that $f(1) = 1 - \frac{\gamma}{1-\beta^M} - c > 0$ from the above condition on $M$. Since $f$ is continuous and increasing between these two values it accepts a unique root $d_0\in (\frac{\gamma}{1-\beta^M}, 1)$ which would satisfy the sought geometric sequence upper bound for all $k$. 

    We now turn to the part of the sequence of $z_k$ that depends on the error terms $T_k$. Define for $k\leq 0$
    \begin{align}
        y_k = 0,
    \end{align}
    and for $k \geq 0$
    \begin{align}
        y_{k+1} &= \gamma\frac{y_k + \beta^M y_{k-M}}{1-\beta^M} + \frac{\beta^M(1-\beta)}{(1-\beta^M)^2} \frac{2(\gamma+\gamma^2)}{(1-\gamma)^2} \sum_{i=0}^{M-1}\beta^{i}\left(y_{k-i-1} + y_{k-i-1-M}\right) + \err_k.\label{eq:fmpi:yk}
    \end{align}
    Let $\bar{T} = \underset{0\leq i\leq k}{\max} T_i$, and $\gamma_M = \frac{\gamma}{1-\beta^M}$. Then we have that
    \begin{align}
        y_k \leq \frac{\bar{T}}{1-(\gamma_M+c)}.
    \end{align}
    Indeed, it is true for $k\leq 0$, and if it is true up to $k$ then
    \begin{align}
        y_{k+1} &\leq (\gamma_M + c)  \frac{\bar{T}}{1-(\gamma_M+c)} + \err_k,\\
        &\leq (\gamma_M + c)  \frac{\bar{T}}{1-(\gamma_M+c)} + \bar{T},\\
        &= \frac{\bar{T}}{1-(\gamma_M+c)}.
    \end{align}
    Replacing in Eq.~\eqref{eq:fmpi:yk}, we have
    \begin{align}
        y_{k+1} &\leq \gamma_M y_k + c  \frac{\bar{T}}{1-(\gamma_M+c)} + T_k,\\
                &\leq \gamma_M^{k+1}y_0 + \sum_{i=0}^{k} \gamma_M^{i} \left(T_{k-i} + c \frac{\bar{T}}{1-\gamma_M-c}\right),\\
                &= \sum_{i=0}^{k} \gamma_M^{i} \left(T_{k-i} + c \frac{\bar{T}}{1-\gamma_M-c}\right).
    \end{align}
    Finally, because the upper bound $z_{k+1}$ in Eq.~\eqref{eq:fmpi:zk} has linear dependencies on previous $z_i$ terms ($i\leq k$), we immediately have that $z_{k} \leq x_{k} + y_{k}$. Indeed, it is true for $k=0$, since $z_0 = \ninf{\Qts} = x_0 + y_0$. And if we assume that it is true for $k$, using Eq.~\eqref{eq:fmpi:zk}, we immediately have that it is true for $k+1$. Thus
    \begin{align}
        z_{k+1} &\leq x_{k+1} + y_{k+1},\\
        &\leq d_0^{k+1} \ninf{\Qts} + \sum_{i=0}^{k} \gamma_M^{i} \left(T_{k-i} + c \frac{\bar{T}}{1-\gamma_M-c}\right).
    \end{align}
\end{proof}

\clearpage

\section{Additional Experiments}
\label{app:experimental_results}
\begin{figure}[h]
    \centering
    \hfill\includegraphics[width=0.95\textwidth]{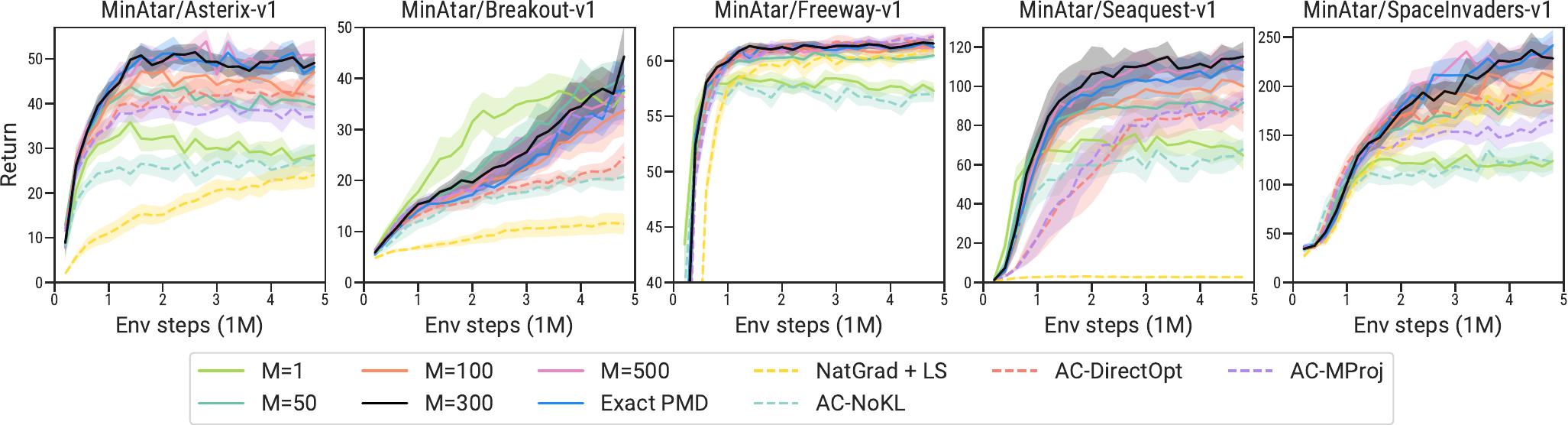}\par
    \vspace{1em}
    \includegraphics[width=\textwidth]{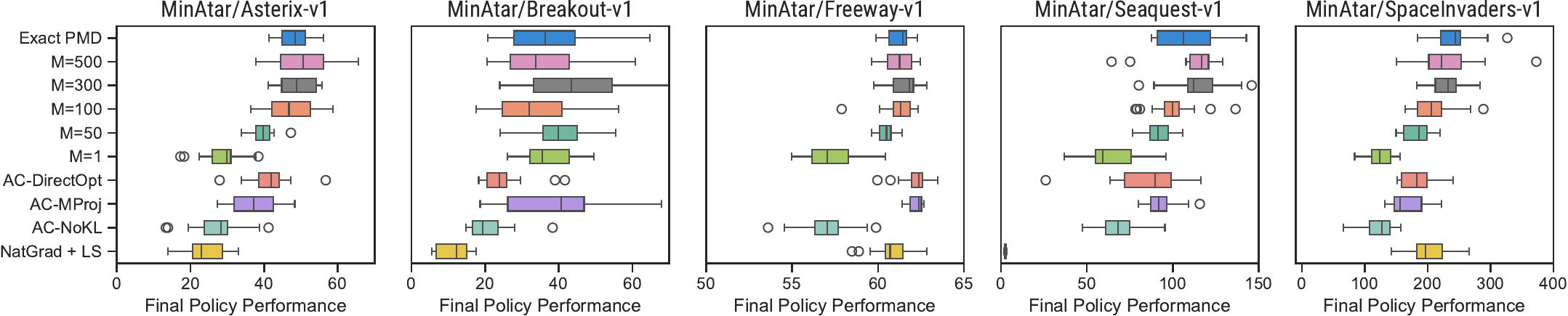}
    \caption{{Evaluation of StaQ for different memory sizes $M$, as well as AC PMD baselines, on MinAtar environments. Learning curves (\textbf{top}) show mean and 95\% confidence interval for 20 seeds, and box plots (\textbf{bottom}) of final policy performance.}}
    \label{fig:extraM}
\end{figure}
{\subsection{Learning curves and final performance box plots}
\label{app:Mablation}

{Figure~\ref{fig:extraM} shows the mean and 95\% confidence interval of the performance of StaQ for different choices of $M$.
Setting $M=1$ corresponds to no KL-regularization as discussed in App.~\ref{app:notsac} and can be seen as an adaptation of SAC to discrete action spaces. $M=1$ can be unstable and lead to performance collapse. Adding KL-regularization and increasing $M$ helps to improve performance, but in general we note diminishing returns in increasing $M$, with $M=100$ matching the performance of exact EPMD in most environments and $M=300$ being virtually indistinguishable from exact EPMD. Compared to natural policy gradient and other AC methods, performance is generally improved on most tasks, making StaQ a potentially better alternative to other PMD implementations, at least on environments where Q-function computations can be efficiently batched.

\clearpage

\subsection{Comparison with deep RL baselines}
\label{app:full_results}
We summarize all performance comparisons with the deep RL baselines in Fig.~\ref{fig:xp_returns} and Table~\ref{tab:final_perf}. 
We provide a discussion of the MountainCar environment and some of the challenges of exploration in an entropy-regularized setting in App.~\ref{app:explore}.

\begin{figure}[h!]
    \centering
    \includegraphics[width=0.97\linewidth]{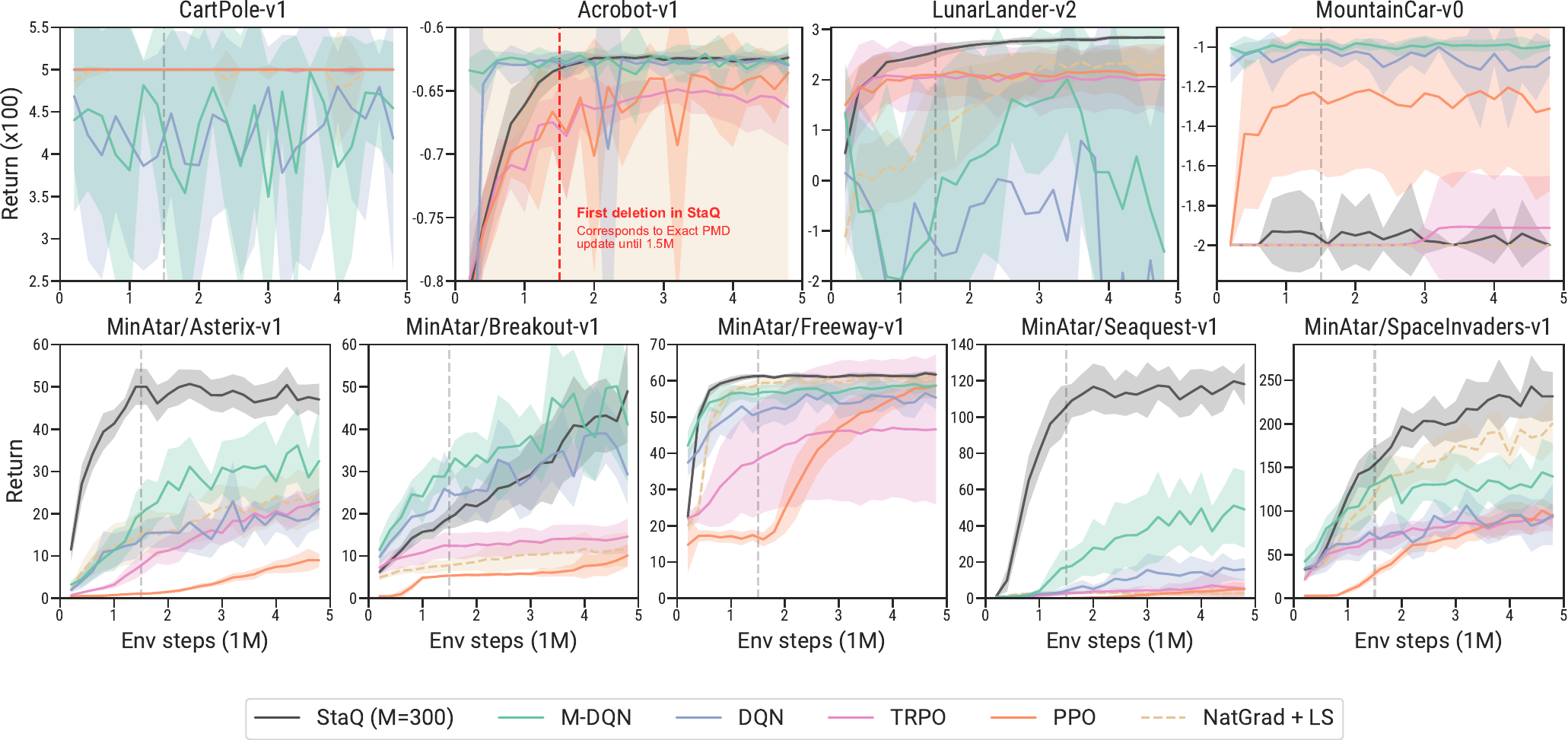}
    \caption{Policy performance of StaQ ($M=300$) vs deep RL baselines across all environments. Results showing the mean and standard deviation across 10 seeds. Note that for $M = 300$ the first deletion occurs at 1.5M timesteps -- \textbf{this deletion has no visible impact on policy performance}. %
    }
    \label{fig:xp_returns}
\end{figure}

\begin{table}[h!]
\centering
\begin{tabular}{lrrrrrr}
\toprule
                          & StaQ (M=300)   & DQN          & M-DQN        & PPO          & TRPO         &   NatGrad + LS \\
\midrule
 CartPole-v1              & \textbf{500}   & 418          & 454          & \textbf{500} & \textbf{500} &            497 \\
 Acrobot-v1               & \textbf{-63}   & \textbf{-63} & \textbf{-63} & -64          & -66          &            -97 \\
 LunarLander-v2           & \textbf{283}   & -343         & -142         & 209          & 201          &            240 \\
 MountainCar-v0           & -199           & -105         & \textbf{-99} & -131         & -191         &           -200 \\
\midrule
 MinAtar/Asterix-v1       & \textbf{49}    & 21           & 32           & 9            & 23           &             24 \\
 MinAtar/Breakout-v1      & \textbf{44}    & 29           & 41           & 10           & 15           &             11 \\
 MinAtar/Freeway-v1       & \textbf{62}    & 55           & 59           & 59           & 47           &             61 \\
 MinAtar/Seaquest-v1      & \textbf{115}   & 16           & 49           & 5            & 5            &              3 \\
 MinAtar/SpaceInvaders-v1 & \textbf{229}   & 96           & 140          & 94           & 93           &            203 \\
\bottomrule
\end{tabular}
\caption{Final performance on all environments comparing StaQ ($M=300$) and Deep RL baselines, as well as \texttt{NatGrad + LS} for comparison with TRPO.}
\label{tab:final_perf}
\end{table}

\clearpage

\subsection{Entropy regularization does not solve exploration}

\begin{figure}
    \centering
    \def\plotwidth{0.32}
    \includegraphics[width=\plotwidth\linewidth]{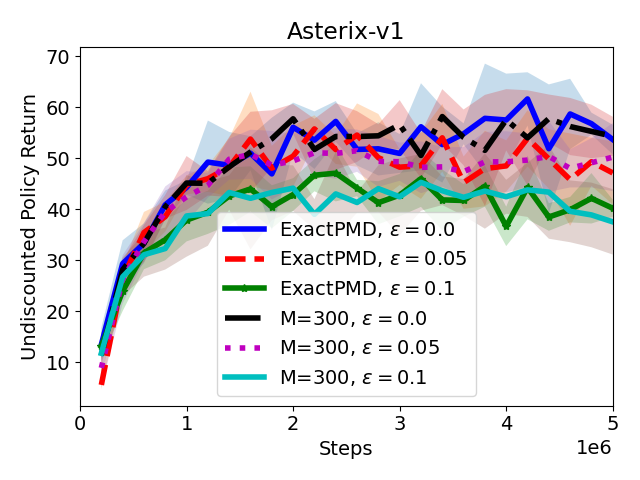}
    \includegraphics[width=\plotwidth\linewidth]{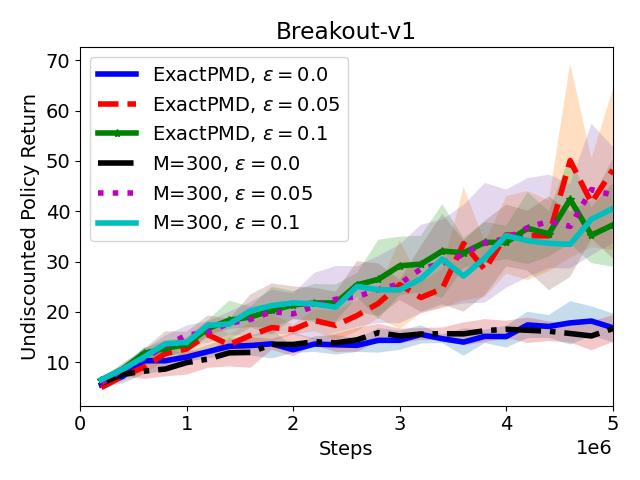}
    \includegraphics[width=\plotwidth\linewidth]{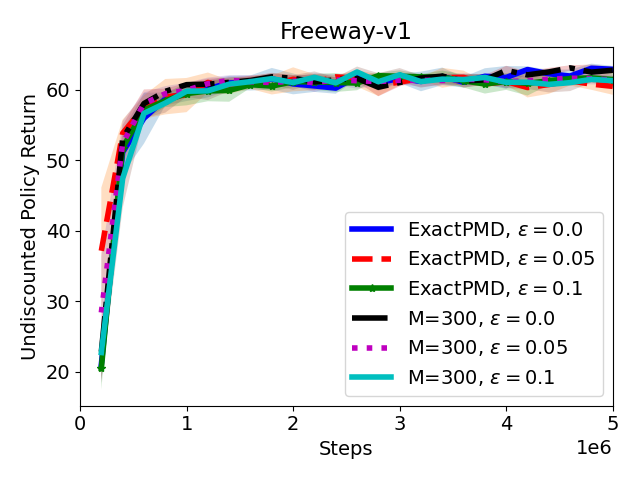}
    \includegraphics[width=\plotwidth\linewidth]{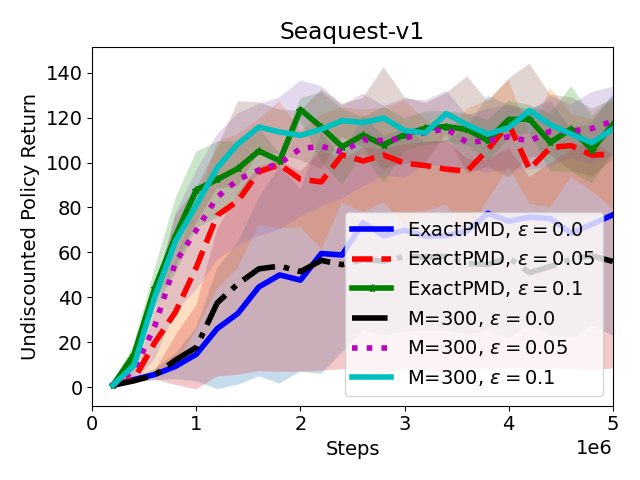}
    \includegraphics[width=\plotwidth\linewidth]{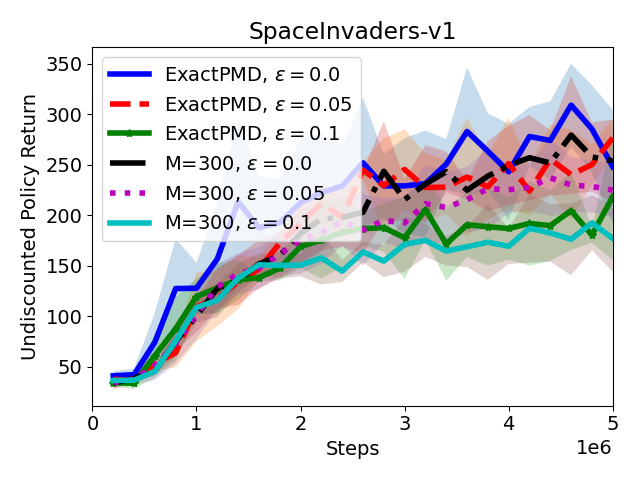}
    \caption{Impact of the $\epsilon$ exploration parameter of StaQ on performance in MinAtar tasks. Plots showing mean and std. deviation over 5 seeds.}
    \label{fig:enter-label}
\end{figure}

The behavioural policy $\pi_k^b$ used for exploration is based on sampling from the softmax policy with logits $\xi_k$. However, we observe that some environments benefit from the addition of a constant probability $\epsilon=0.05$ of sampling a uniformly-random action throughout the learning phase. This observation that uniform sampling can give rise to better exploration that softmax policies has been made previously, especially in sparse-reward settings such as Atari. See e.g. the discussion in App.~B2 in the M-DQN paper~\cite{MDQN}. In this appendix we ablate the choice of this $\epsilon$-softmax policy and discuss modified behavioural policies.

In a set of additional experiments we evaluate the impact $\epsilon$ has on performance. To this end, we launch StaQ with $M=300$ and Exact PMD on MinAtar. We see that $\epsilon$ exploration is important for both Breakout and Seaquest, but has less of an impact on other tasks. In fact, on Asterix and SpaceInvaders, increasing the epsilon decreases the performance slightly, perhaps because of the increase to the off-policiness of the data which might deteriorate policy evaluation. As such, our choice of $\epsilon=0.05$ in the main experiment appears to provide a good trade-off between improving exploration and limiting the off-policiness of data on MinAtar tasks. In all cases, independent of the choice of $\epsilon$, StaQ with $M=300$ remains close to Exact PMD in performance.

\label{app:explore}
\begin{figure}[t!]
    \centering
    \includegraphics[width=0.35\linewidth]{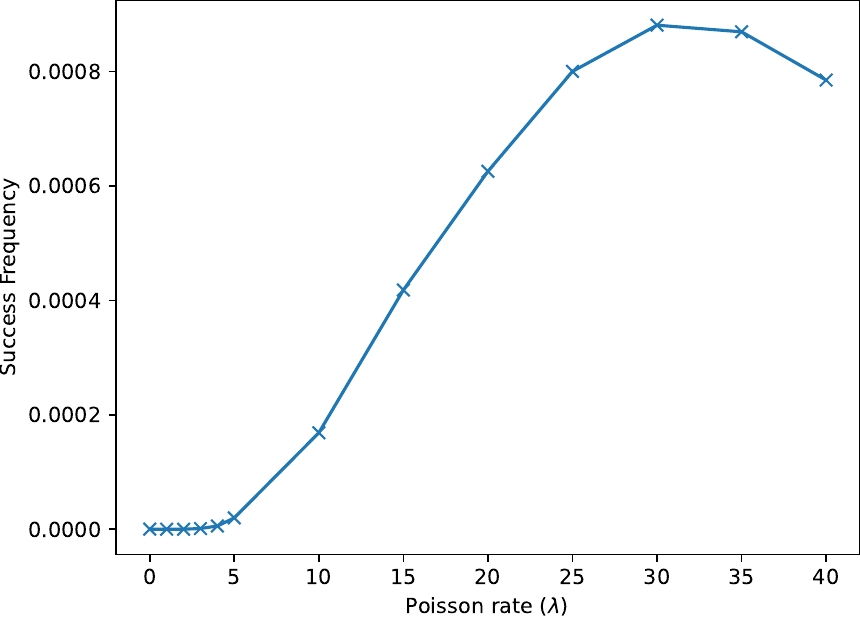}
    \includegraphics[width=0.25\linewidth]{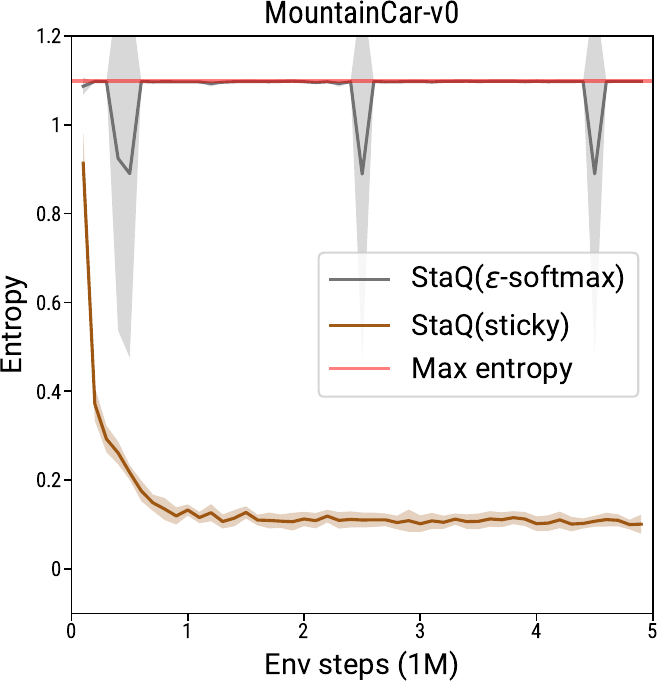}
    \includegraphics[width=0.36\linewidth]{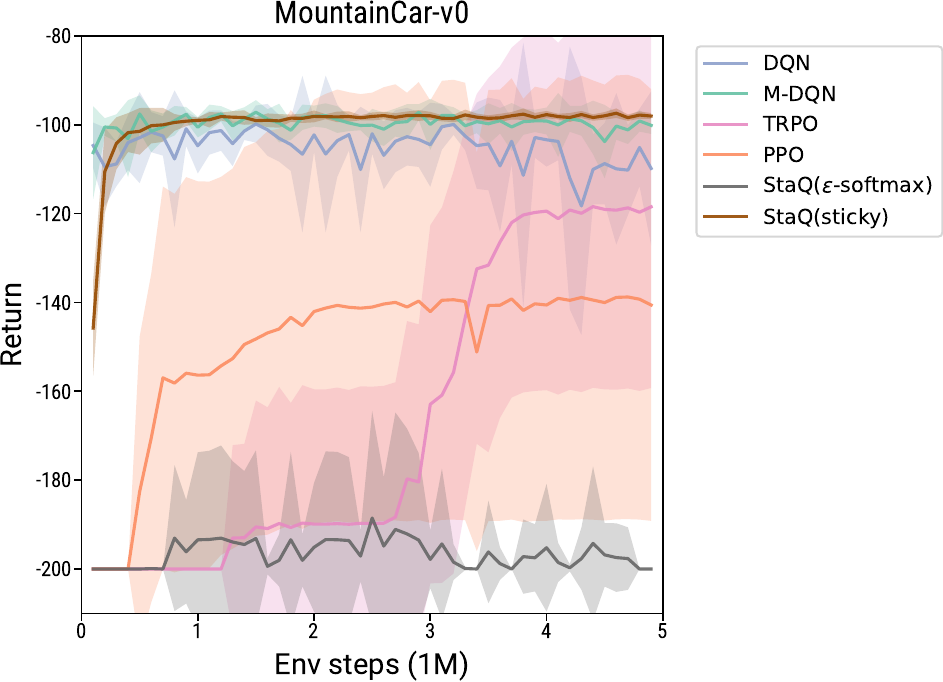}
    \caption{\textbf{Left:} Frequency of non-zero rewards of a uniform policy with sticky actions for different choice of Poisson rate $\lambda$ on MountainCar over 5M timesteps. \textbf{Middle:} Entropy of learned policies under different behavior policies. Entropy of the uniform (Max entropy) policy plotted for reference. \textbf{Right:} Policy returns for StaQ with different behavior policies and deep RL baselines on MountainCar. Adding sticky actions to StaQ's behavior policy fixes its performance on this task.}
    \label{fig:xp_explore}
\end{figure}

StaQ and exact EPMD achieve strong performance on all 9 environments except on MountainCar where they fail to learn. In this section, we perform additional experiments to understand the failure of StaQ on MountainCar. In short, it appears that the initial uniform policy---which has maximum entropy---acts as a strong (local) attractor for this task: StaQ starts close to the uniform policy, and exploration with this policy does not generate a reward signal in MountainCar. As StaQ does not observe a reward signal in early training, it quickly converges to the uniform policy which has maximum entropy, but also has extremely small probability of generating a reward signal. Indeed, we unrolled a pure uniform policy on MountainCar for 5M steps, and never observed a reward.

However, StaQ is not limited to a specific choice of behavior policy, and choosing a policy that introduces more correlation between adjacent actions, like a simple ``sticky'' policy allows StaQ to solve MountainCar. This policy samples an action from $\pik$ and applies it for a few consecutive steps, where a number of steps is drawn randomly from Poisson($\lambda$) distribution (in our experiments with StaQ we fix the rate of Poisson distribution at $\lambda = 10$). In Fig.~\ref{fig:xp_explore}, we can see that StaQ with the same hyperparameters for classical environments (see Table~\ref{tab:hyper_staq}) and a "sticky" behavior policy %
manages to find a good policy for MountainCar matching the best baseline. The final policy demonstrates much lower entropy compared to the default policy that fails at learning for this environment. We focused on this paper on the benefits of entropy regularization in averaging evaluation error. While it is believed that entropy might help with exploration, these observations are a good reminder that entropy regularization remains a heuristic exploration strategy that does not replace a theoretically grounded strategy, which is beyond the scope of this paper that only focuses on using entropy regularization to reduce the error floor of approximate policy iteration when using function approximators.

\subsection{Averaging of evaluation error} 
\label{sec:app:shinrl}
\begin{figure}
    \centering
    \includegraphics[width=0.4\linewidth]{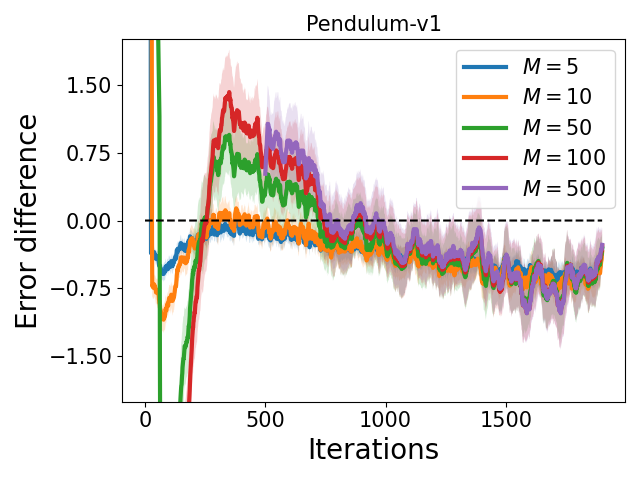}
    \includegraphics[width=0.4\linewidth]{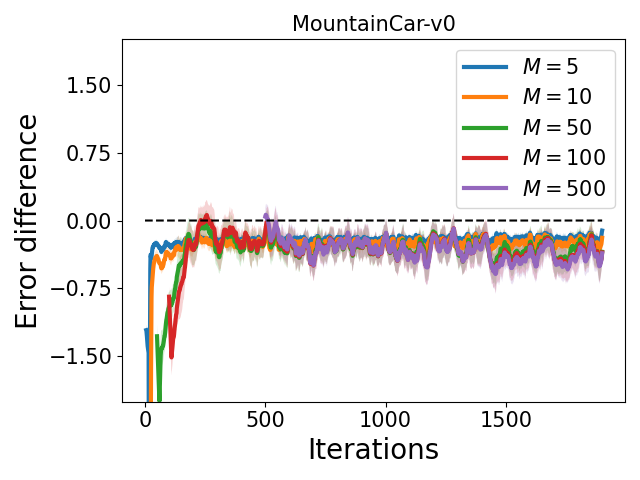}
    \caption{Policy evaluation difference $\ninf{\frac{1-\beta}{1-\beta^M}\sum_{i=0}^{M-1}\beta^{i}\epsilon_{k-i}} - \ninf{\epsilon_k}$ between the weighted average of the last $M$ evaluation errors versus the last evaluation error $\epsilon_k$, for different values of $M$, for a PMD value iteration scheme. As the number of iterations increases, we observe an averaging of policy evaluation errors effect due to the $\KL$ regularization even on simple classic control tasks. Results averaged over 50 seeds, showing mean and 95\% confidence interval. For readability, the results are smoothed with a moving average of window size 20.}
    \label{fig:cancelerr}
\end{figure}
We explore in this section whether the averaging effect of the evaluation errors due to the $\KL$ regularization does occur in practice. While we would want the per iteration evaluation errors $\{\epsilon_i\}_{i=1}^k$ to behave like uncorrelated random noise, in practice in a deep RL setting, the neural network at the current iteration is often reused to estimate the Q-function at the next iteration. Datasets also often overlap in replay buffers for the estimation of successive Q-function, creating correlations between successive Q-functions and their successive errors. Taking all this into account, it becomes unclear whether a cancellation of evaluation errors does indeed occur in practice. To evaluate this effect, we make use of the ShinRL library \citep{shinrl} that allows the computation of the exact Q-function for small scale environments such as Pendulum and MountainCar. The library operates by internally building a discrete MDP, and associating to each discrete state an auxiliary vectorial observation that can be processed by a function approximator. 

Using these environments, we implement a PMD value iteration algorithm following exactly the definition of Sec.~\ref{sec:epmdvi}, using a neural network $q_k$ for approximating the current soft Q-function $\Qtk$, but using exact policy update on the discrete states. The learning of $q_k$ follows a standard procedure where a target is computed using $q_{k-1}$ and regressed for $5000$ gradient steps. The policy is then updated using $\eta=1$ and $\tau=0.03$. At each iteration, we compute the exact Q-function $\Qtk$ and the evaluation error $\epsilon_k$ using the internally discrete MDP. 

To evaluate the potential cancellation of errors, we then compute the difference between $\ninf{\frac{1-\beta}{1-\beta^M}\sum_{i=0}^{M-1}\beta^{i}\epsilon_{k-i}} - \ninf{\epsilon_k}$, i.e. between the weighted average of the last $M$ evaluation errors versus the last evaluation error $\epsilon_k$, for different values of $M$. Fig.~\ref{fig:cancelerr} shows that as the number of iterations increases, the averaging of policy evaluation error does drop below the last error $\epsilon_k$, even for these simple environments. 

\section{Connection to Actor-Critic and PMD baselines}
\label{app:connection_baselines}

In this section, we describe how the baselines SAC~\citep{SAC_app}, TRPO~\citep{TRPO}, MDPO~\citep{tomar2022mirror} and ECPO~\citep{ECPO} connect to StaQ, and more generally to the Entropy-regularised Policy Mirror Descent Objective~\eqref{eq:pup}.

\subsection{Soft Actor Critic}
\label{app:notsac}

By setting $M=1$ in StaQ, we remove the KL-divergence regularization and only keep the entropy bonus. This baseline can also be seen as an adaptation of SAC to discrete action spaces: indeed, if we set $M=1$ in Eq.~(\ref{eq:intro:staq}) we recover the policy logits
\begin{align}
    \xi_{k+1} &= \frac{\alpha}{1-\beta^M}\sum_{i=0}^{M-1}\beta^i Q_{\tau}^{k-i}\\
    &= \frac{\alpha}{1-\beta}Q_{\tau}^{k} \\
    &= \frac{Q_{\tau}^{k}}{\tau},
\end{align}
where the last line is due to $\alpha \tau = 1-\beta$. This results in a policy of the form $\pi_{k+1} \propto \exp\left(\frac{Q_{\tau}^k}{\tau}\right)$. Meanwhile, for SAC, the actor network is obtained by minimizing the following problem (Eq. 14 in \citet{SAC_app}) for states sampled from some replay buffer $\cal D$
\begin{align}
\pi_{k+1} = \underset{\pi}{\arg\min}\ \E_{s\sim {\cal D}}\left[\text{KL}\left(\pi(s) \middle| \frac{\exp\left(\frac{Q_{\tau}^k(s)}{\tau}\right)}{Z_{\text{norm.}}}\right)\right]
    \label{eq:sacobj}
\end{align}
However, in the discrete action setting, we can sample directly from $\exp\left(\frac{Q_{\tau}^k}{\tau}\right)$---which is the minimizer of the above KL-divergence term---and we do not need an explicit actor network. As such StaQ with $M=1$ could be seen as an adaptation of SAC to discrete action spaces. Nevertheless, we also consider as a baseline an actor-critic version that updates the actor by attempting to solve Eq.~\ref{eq:sacobj} via gradient descent, resulting in an even closer baseline to SAC. We label this baseline \texttt{AC-NoKL (SAC)}.

\subsection{Approximate Policy Mirror Descent}
\label{sec:app:acpmd}

In the presence of function approximation, simple updates in logit space of a Boltzmann policy do not in general translate to simple updates in the Q-function parameter space. Let $\Theta_Q$ and $\Theta_\pi$ respectively be the parameter spaces of Q-functions and policy logits; these parameter spaces can for instance be subsets of $\R^d$ for some integer $d$. Let $\xi_\theta:S\times A\mapsto \R$, with $\theta\in\Theta_\pi$ be a function that provides the logits of a policy $\pi_\theta\propto\exp(\xi_\theta)$ and let $q_{\theta'}:S\times A\mapsto \R$, with $\theta'\in\Theta_\pi$, be the (approximate) Q-function associated to $\pi_\theta$. We want to find $\xi_{\theta''}$ as the solution to the entropy regularized policy update in Eq.~\ref{eq:pup}. We know that ideally we would have $\xi_{\theta''} = \beta \xi_{\theta} + \alpha q_{\theta'}$, but this does not give an expression for $\theta''$ since in general the policy maximizing Eq.~\ref{eq:pup} is not necessarily parameterized by $\beta {\theta} + \alpha {\theta'}$. 

One exception to the above claim is when the Q-function and the logits function are linear w.r.t. some predefined feature function, in which case $\xi_{\beta {\theta} + \alpha {\theta'}} = \beta \xi_{\theta} + \alpha q_{\theta'}$. This is the so-called compatible feature setting of policy gradient \citep{pgthm,gestnat,copos} where the Q-function and the logits share the same linear-in-feature function approximation class and in which case, policy gradient and natural policy gradient become equivalent~\citep{peterscomp}. However, beyond the linear-in-feature case, the closed form solution of the entropy regularized policy update in the space of logits does not yield a trivial update in parameter space.

One approach to policy update in parameter space is to solve the following optimization problem 
\begin{align}
    \underset{\theta''\in\Theta_\pi}{\arg\max}\ \theta''\cdot\nabla_{\theta}J_\tau(\pi_\theta) - \eta\E_{s\sim \mu_{\pi_\theta}} \KL(\pi_{\theta''}(s); \pi_\theta(s)), \label{eq:prelim:mdavg}
\end{align}
where $J_\tau(\pi_\theta)$ is the policy return $\pi_\theta$, i.e. the expectation of the $V^{\pi_\theta}$ over states sampled from a predefined initial state distribution. Interestingly, \cite{Cen} showed that in the tabular case (i.e. when optimizing directly over the logit space), the above problem is equivalent to the maximization in Eq.~\ref{eq:pup} over each state independently. In the general case however, because of the approximation in the optimization of Eq.~\ref{eq:prelim:mdavg} or because of the use of a restricted policy class, we are likely to obtain a new policy $\pi_{\theta''}$ that is worse than $\pikp$ in the sense of the policy update objective defined in Eq.~\ref{eq:pup}.

\subsection{Approximate PMD Algorithms}

We now describe several approximate algorithms for solving either the original PMD objective~\eqref{eq:pup} or a state-averaged formulation thereof.

\textbf{NatGrad + LS (TRPO).} We implement \texttt{NatGrad~+~LS~(TRPO)} that updates the policy by approximately solving Eq.~\ref{eq:prelim:mdavg} using TRPO's conjugate gradient approach. We use a post-update line search step that constrains the $\KL$ between successive policies to be under an $\epsilon_\text{KL}$ threshold. This baseline has subtle differences with TRPO: for instance even when an entropy bonus is used, TRPO does not learn soft value functions and only regularizes the policy update with an additional entropy term. We found that by matching StaQ's setting, ``NatGrad + LS'' is comparable in performance to \texttt{stable-baselines3}'s TRPO on most of the tasks but can outperform it on others, especially on MinAtar tasks, as seen from the results in App.~\ref{app:full_results}.

\textbf{AC-MProj (ECPO).} The entropy regularized PMD solution is $\pi_{k+1}\propto\pi_k^\beta \exp\left({\alpha\Qtk}\right)$. Tracking this recursive policy definition exactly results in an infinite sum, but we can instead approximate $\pi_k^\beta \exp\left({\alpha\Qtk}\right)$ with an actor network. Specifically, we consider a second baseline inspired by ECPO \citep{ECPO} that advocates for minimizing the M-Projection between the actor and the PMD solution
\begin{align}
    \pi_{k+1} = \underset{\pi}{\arg\min}\ \E_{s\sim {\cal D}}\left[\text{KL}\left(\frac{\pi_k^\beta \exp\left({\alpha\Qtk}\right)}{Z_{\text{norm.}}} \middle| \pi(s) \right)\right],
    \label{eq:ecpoobj}
\end{align}
where states are again sampled from a replay buffer $\cal D$. This differs from the forward-KL objective in e.g. SAC; compare with~\eqref{eq:sacobj}. The authors of ECPO argue that the M-Projection will better preserve the support of the distribution and prevent the premature elimination of actions during exploration. We will label this baseline \texttt{AC-MProj (ECPO)}.

\textbf{AC-DirectOpt (MDPO)} Finally, we consider an actor-critic baseline inspired by MDPO \citep{tomar2022mirror} that optimizes a state averaged version of the entropy regularized PMD update, i.e. 
\begin{align}
    \pi_{k+1} = \underset{\pi}{\arg\max}\ \E_{s\sim {\cal D}} \left[\Qtk(s) \cdot \pi(s) - \tau h(\pi(s)) -\eta \KL(\pi(s);\pik(s))\right],
\end{align}
by performing a few gradient steps over the above objective. Unlike ECPO, MDPO skips the closed form solution and optimizes directly the entropy and $\KL$ regularized objective by gradient ascent. We will call this baseline \texttt{AC-DirectOpt (MDPO)}.

\textbf{Experimental comparisons with Exact PMD.} As can be seen from the main experimental results in Fig.~\ref{fig:mainxp_m} (and also the further results in App.~\ref{app:experimental_results}), AC-DirectOpt and AC-MProj being approximations of Exact PMD, end-up having worse performance on several tasks. Comparing the AC methods between themselves, AC-DirectOpt and AC-MProj have close performance on the aggregate metrics, while AC-NoKL lags behind the other two and shows again the benefits of $\KL$ regularization on policy update. Compared to StaQ, the performance of the $\KL$ regularized AC methods (i.e. AC-DirectOpt and AC-MProj) is between that of StaQ with $M=1$ and $M=50$, while AC-NoKL performs worse than $M=1$. The improved performance of StaQ with higher $M$ comes at little performance cost as can be seen in the table in Fig~\ref{fig:overview}, where the runtime remains close to that of $M=1$, whereas all three AC methods have to perform double the amount of gradient steps to update the actor in addition to the critic, resulting in a 35 to 45\% longer run time than StaQ with $M=1$.

\section{Hyperparameters and Implementation Details}
\label{app:hyperparams}

\textbf{Policy Evaluation.} In all our experiments, we use an ensemble of two neural networks, similarly to e.g. SAC \citep{SAC_app}, to evaluate a $Q$-function and therefore two stacked neural networks for $\xi$-logits. In particular, we optimize the current Q-function weights $\theta$ to minimize the loss $\mathcal{L}(\theta)$,
\begin{align}
    \mathcal{L}(\theta) &= \mathbb{E}_{\left(s, a\right) \sim \mathcal{D}}\left[\frac{1}{2}\left(Q_\theta\left(s, a\right)-\hat{Q}\left(s, a\right)\right)^2\right] \label{eq:fqi_loss}\\
    \hat{Q}(s,a) &:= R(s,a) + \gamma \mathbb{E}_{s'\sim\mathcal{D},a\sim\pi(s')}\left[ \frac{1}{2} \sum_{i =1}^2 Q_{\hat{\theta}_i}( s',a') - \tau h(\pi (s'))\right]
\end{align}
computing the mean over the target Q-functions with weights $\hat{\theta}_1, \hat{\theta}_2$.

\textbf{Hyperparameters.} Below, we provide the full list of hyperparameters used in our experiments\footnote{Code will be released to an open-source repository upon publication.}. StaQ's hyperparameters are listed in Table~\ref{tab:hyper_staq}, while the hyperparameters for our baselines are provided in Tables~\ref{tab:hyper_ppo}-\ref{tab:hyper_mdqn}. For TRPO and PPO, we use the implementation provided in \texttt{stable-baselines4}~\citep{stable-baselines3}, while we used our in-house PyTorch implementation of (M)-DQN\footnote{We will add the link to the in-house library upon publication.}.

To account for the different scales of the reward between environments, we apply a different reward scaling to the Classic environments and MinAtar. Note that this is equivalent to inverse-scaling the entropy weight $\tau$ and KL weight $\eta$, ensuring that $\xik$ is of the same order of magnitude for all environments. To account for the varying action dimension $|A|$ of the environments, we set the \emph{scaled entropy coefficient} $\bar{\tau}$ as a hyperparameter, defined by $\bar{\tau} = \tau \log |A|$, rather than directly setting $\tau$. Furthermore, the entropy weight is linearly annealed from its minimum and maximum values.

\begin{table}[h!]
\centering
\begin{tabular}{cccc}
\toprule
Hyperparameter              & Classic         & MinAtar         \\ \midrule
Discount ($\gamma$)         & 0.99            & 0.99            \\
Memory size ($M$)           & 300             & 300             \\ \midrule
Policy update interval      & 5000            & 5000            \\
Target type                 & hard            & hard            \\
Target update interval      & 200             & 200             \\
Epsilon                     & 0.05            & 0.05             \\ 
Reward scale                & \textbf{10}     & \textbf{100}             \\
KL weight ($\eta$)          & 20     & 20     \\
Initial scaled ent. weight  & 2.0             & 2.0             \\
Final scaled ent. weight    & 0.4             & 0.4             \\
Ent. weight decay steps     & \textbf{500K}   & \textbf{1M}     \\ \midrule
Architecture                & $\bf 256 \times 2$  & \textbf{Conv(16, 3, 3) + 128 MLP}  \\
Activation function         & ReLU            & ReLU            \\
Learning rate               & 0.0001          & 0.0001 \\
Optimizer                   & Adam            & Adam \\
Replay capacity             & 50K             & 50K             \\
Batch size                  & 256             & 256             \\ \bottomrule
\end{tabular}
\caption{StaQ hyperparameters, with parameters which vary across environment types in bold.}
\label{tab:hyper_staq}
\end{table}

\begin{table}[h!]
\centering
\begin{tabular}{cccc}
\toprule
Hyperparameter             & Classic            & MinAtar                            \\ \midrule
Discount factor ($\gamma$) & 0.99               & 0.99                               \\
Horizon                    & 2048               & 1024                               \\
Num. epochs                & 10                 & 3                               \\
Learning starts            & 5000               & 20000                              \\ \midrule
GAE parameter              & 0.95               & 0.95                               \\
VF coefficient             & 0.5                & 1                                  \\
Entropy coefficient        & 0                  & 0.01                               \\ 
Clipping parameter         & 0.2                & $0.1 \times \alpha$ \\ \midrule
Optimizer                  & Adam               & Adam        \\
Architecture               & $256 \times 2$      & Conv(16, 3, 3) + 128 MLP                     \\
Activation function        & Tanh               & Tanh                               \\
Learning rate              & $3 \times 10^{-4}$ & $2.5 \times 10^{-4} \times \alpha$ \\
Batch size                 & 64                 & 256                                 \\ \bottomrule
\end{tabular}
\caption{PPO hyperparameters, based on~\citep{PPO}. In the MinAtar environments $\alpha$ is linearly annealed from 1 to 0 over the course of learning.}
\label{tab:hyper_ppo}
\end{table}

\begin{table}[]
\centering
\begin{tabular}{cccc}
\toprule
Hyperparameter             & Classic            & MinAtar            \\ \midrule
Discount factor ($\gamma$) & 0.99               & 0.99               \\
Horizon                    & 2048               & 2048  \\
Learning starts            & 5000               & 20000 \\ \midrule
GAE parameter              & 0.95               & 0.95               \\
Stepsize                   & 0.01               & 0.01               \\ \midrule
Optimizer                  & Adam               & Adam        \\
Architecture               & $256 \times 2$      & Conv(16, 3, 3) + 128 MLP     \\
Activation function        & Tanh               & Tanh               \\
Learning rate              & $3 \times 10^{-4}$ & $2.5 \times 10^{-4}$ \\
Batch size                 & 64                 & 256                 \\ \bottomrule
\end{tabular}
\caption{TRPO hyperparameters, based on~\citep{TRPO}.}
\label{tab:hyper_trpo}
\end{table}

\begin{table}[]
\centering
\begin{tabular}{cccc}
\toprule
Hyperparameter             & Classic             & MinAtar            \\ \midrule
Discount factor ($\gamma$) & 0.99                & 0.99               \\
Target update interval     & 100                 & 8000               \\
Update-to-data ratio       & 1                   & 1               \\
Epsilon                    & 0.1                 & 0.1               \\
Decay steps                & 20K                 & 20K               \\ \midrule
M-DQN temperature           & 0.03               & 0.03               \\
M-DQN scaling term          & 1.0                & 0.9                \\
M-DQN clipping value        & -1                 & -1                 \\ \midrule
Architecture               & $512 \times 2$      & Conv(16, 3, 3) + 128 MLP     \\
Activation function        & ReLU                & ReLU               \\
Learning rate              & $1 \times 10^{-3}$  & $2.5 \times 10^{-4}$ \\
Optimizer                  & Adam                & Adam \\
Replay capacity            & 50K                 & 1M                 \\
Batch size                 & 128                 & 32                 \\ \bottomrule
\end{tabular}
\caption{MDQN and DQN hyperparameters, based on~\citep{MDQN, RevRainbow}}
\label{tab:hyper_mdqn}
\end{table}

\clearpage

\subsection{Hyperparameter ablations for Natural Gradient and Actor-Critic baselines}
\label{app:klnatgrad}
\begin{figure}[t]
    \centering
    \includegraphics[width=0.24\linewidth]{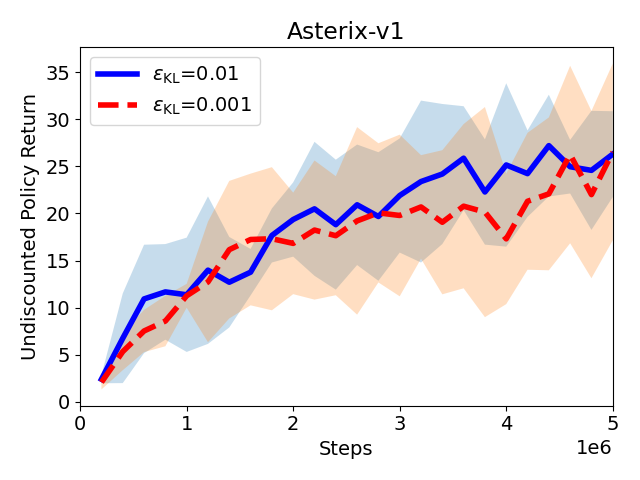}
    \includegraphics[width=0.24\linewidth]{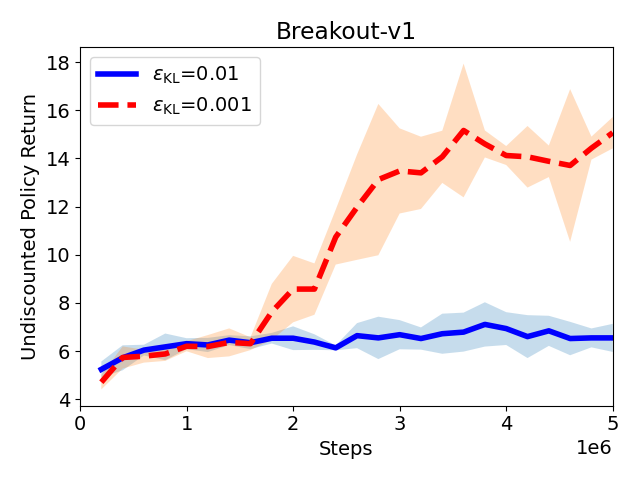}
    \includegraphics[width=0.24\linewidth]{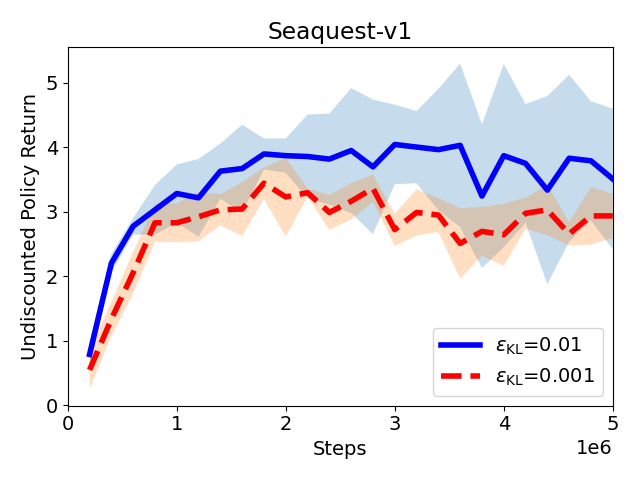}
    \includegraphics[width=0.24\linewidth]{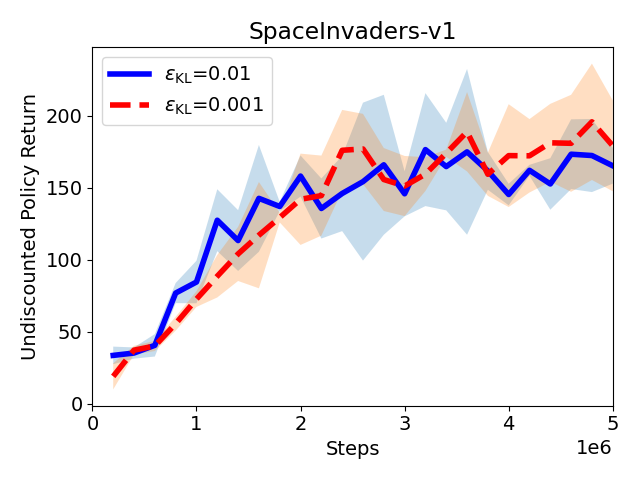}
    \caption{Mean and standard deviation of policy performance of ``NatGrad + LS (TRPO)'' baseline for two values of $\epsilon_\text{KL}$ averaged over 3 seeds.}
    \label{fig:natgradls}
\end{figure}
\begin{figure}[t]
    \centering
    \includegraphics[width=0.24\linewidth]{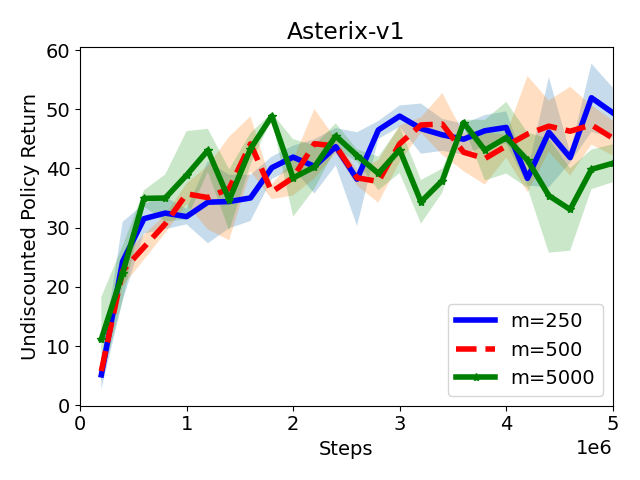}
    \includegraphics[width=0.24\linewidth]{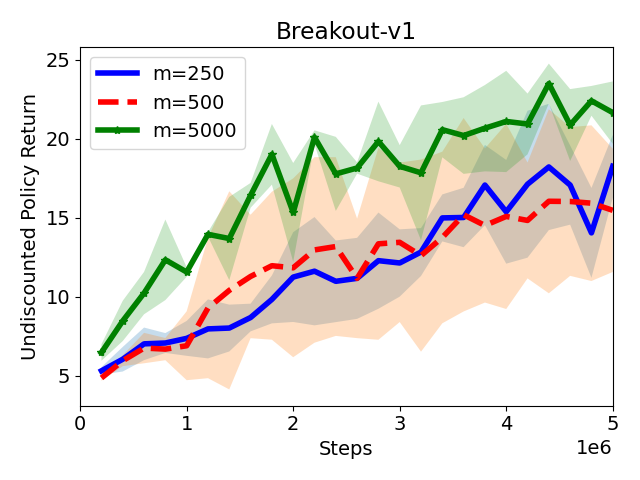}
    \includegraphics[width=0.24\linewidth]{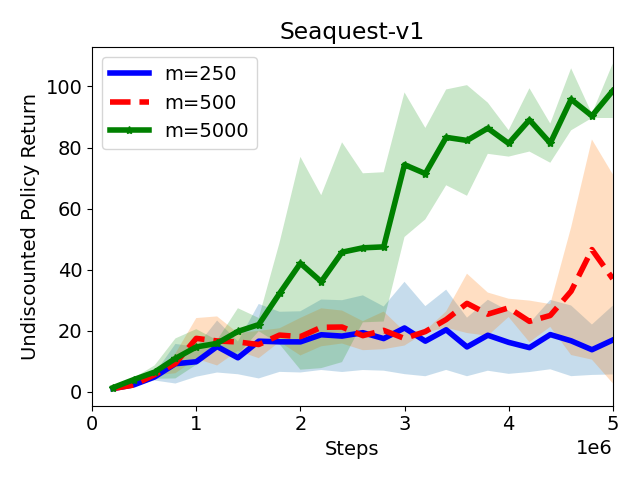}
    \includegraphics[width=0.24\linewidth]{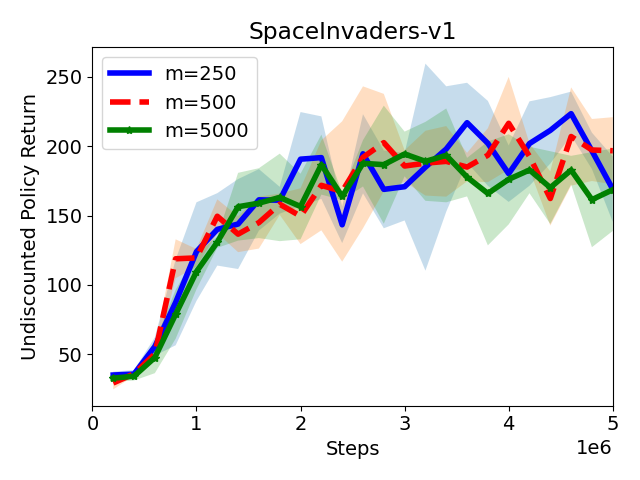}
    \includegraphics[width=0.24\linewidth]{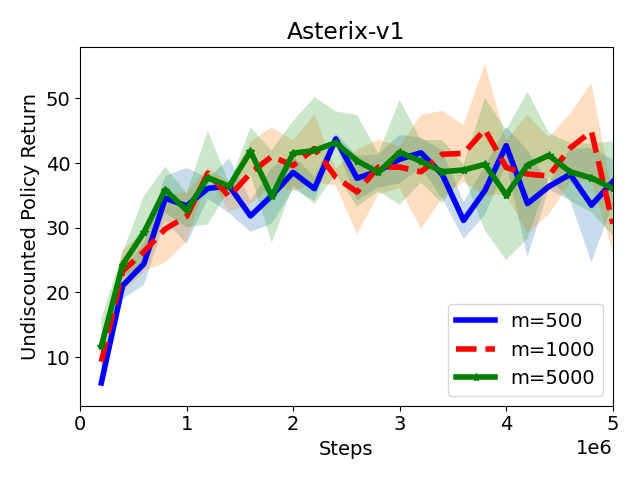}
    \includegraphics[width=0.24\linewidth]{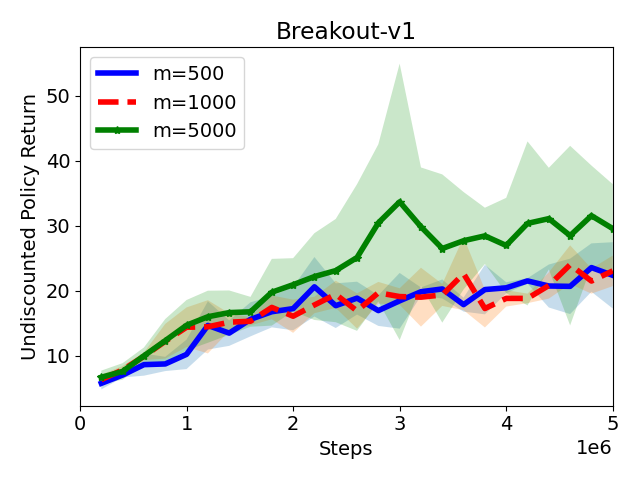}
    \includegraphics[width=0.24\linewidth]{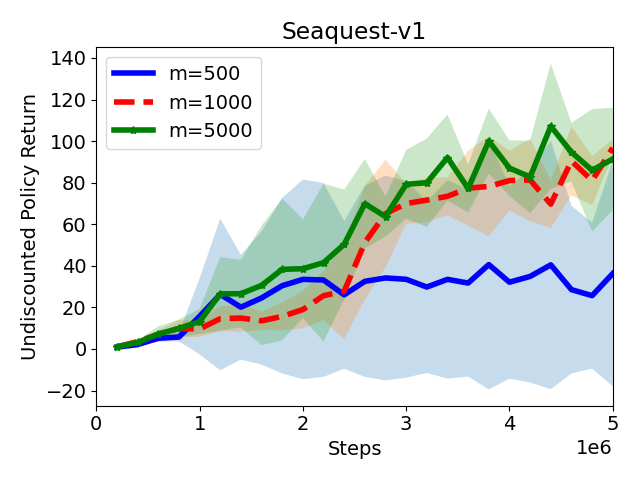}
    \includegraphics[width=0.24\linewidth]{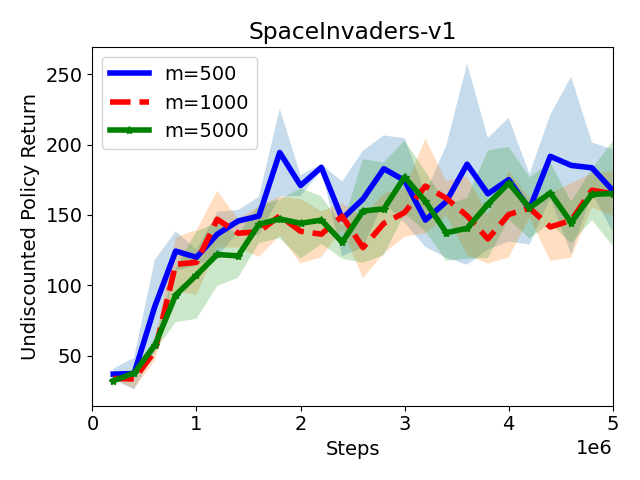}
    \caption{Sensitivity of AC-DirectOpt (MDPO) (\textbf{top}) and AC-MProj (ECPO) (\textbf{bottom}) to the number of gradient steps performed to update the actors. Plots show mean and std. deviation over 3 seeds. See App.~\ref{sec:app:acpmd} for details.}
    \label{fig:MDPOabl}
\end{figure}

For a fair comparison, the NatGrad + LS and Actor-Critic baselines use the same hyperparameters as StaQ, except for the algorithm-specific parameters $\epsilon_\text{KL}$ and $m$, for which we provide sensitivity studies here.

\textbf{NatGrad + LS.} The baseline ``NatGrad + LS'' introduces an additional hyperparameter $\epsilon_\text{KL}$ compared to the standard PMD setting that StaQ follows. To properly set this hyperparameter we perform a preliminary study using 3 seeds on a subset of the MinAtar tasks. The results in Figure~\ref{fig:natgradls} show that on a few tasks, the value of the $\epsilon_\text{KL}$ does not impact greatly the performance, but when there are significant differences between the two algorithms, $\epsilon_\text{KL}=0.001$ is often better than the value $\epsilon_\text{KL}=0.01$ used in may of TRPO's implementations, e.g. in \texttt{stable-baselines3}~\citep{stable-baselines3}. As such, we use $\epsilon_\text{KL}=0.001$ for all experiments involving  ``NatGrad + LS''.

\textbf{MDPO, ECPO and SAC.} An important parameter for MDPO and ECPO is $m$, the number of gradient steps that are used to update the actor. We perform a 3 seed hyperparameter sensitivity analysis w.r.t. $m$ on 4 MinAtar tasks. Results show in Fig.~\ref{fig:MDPOabl} that on some tasks, this parameter is actually not very crucial, but on some other tasks, using more gradient steps for the actor seems to improve performance. As such, for the remained of this section, we will use $m=5000$ for both  AC-DirectOpt (MDPO) and AC-MProj (ECPO), which corresponds to performing one actor gradient update per time-step in the environment. This is the same setting as in SAC, and we will use the same number of actor updates for the AC-NoKL (SAC) baseline. 

\clearpage

\section{Extension to continuous action spaces}
\label{app:cont_actions}
\begin{figure}[t]
    \centering
    \def\plotwidth{0.4}
    \includegraphics[width=\plotwidth\linewidth]{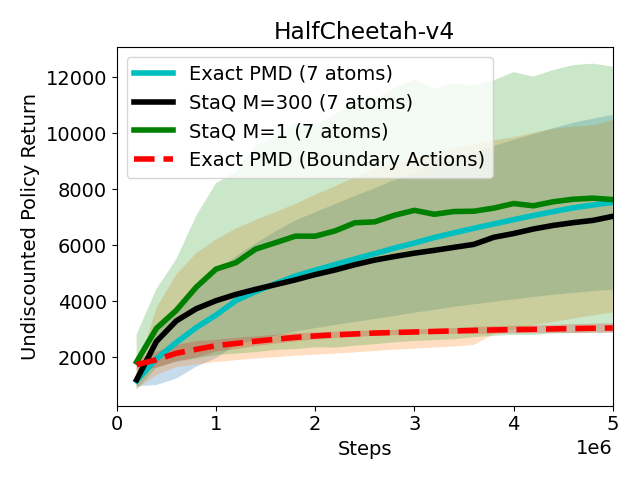}
    \includegraphics[width=\plotwidth\linewidth]{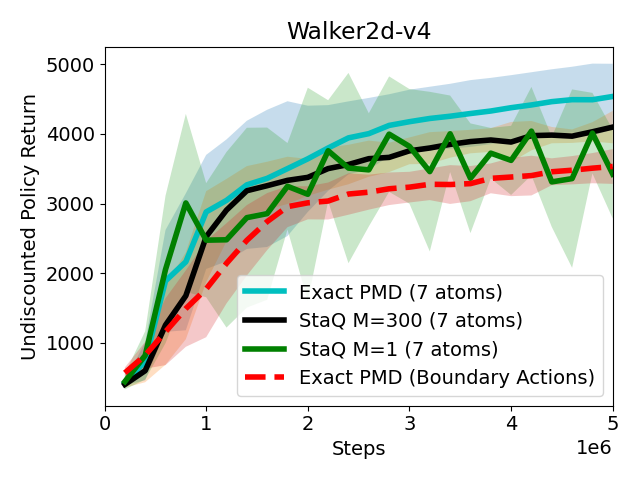}
    \includegraphics[width=\plotwidth\linewidth]{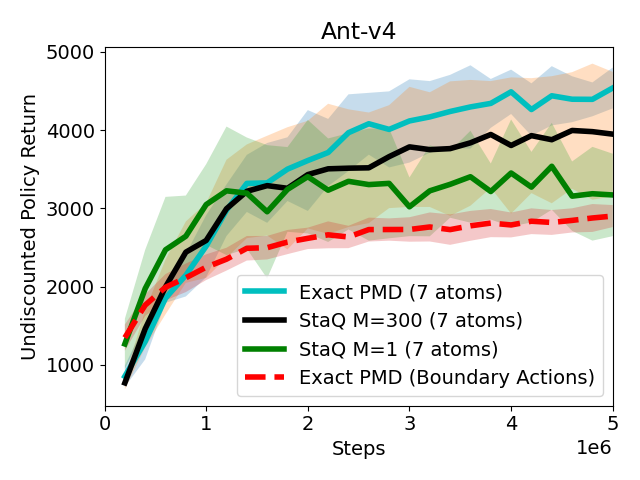}
    \includegraphics[width=\plotwidth\linewidth]{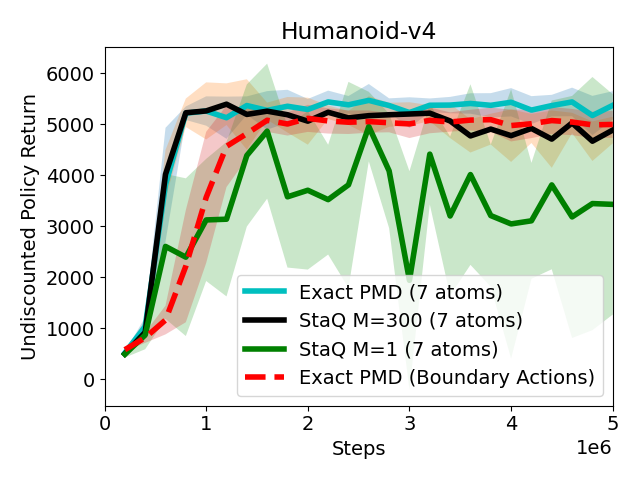}
    \caption{StaQ on MuJoCo with action space discretized by just considering the boundary actions, versus using the independence assumption from~\cite{Tang} with 7 atoms. Results averaged over 5 seeds.}
    \label{fig:fact}
\end{figure}
While StaQ is inherently a discrete-actions algorithm, in this appendix we describe some promising ways in which we could extend it to continuous action spaces. One interesting research questions in this direction is to investigate the improvements a Boltzmann distribution could bring to exploration compared to the unimodal Gaussian distribution that is commonly used in continuous action deep RL algorithms.

The simplest approach is to discretise the continuous action space by only considering the ``boundary actions'', similarly to~\citep{bangbang}. To illustrate, the discrete version of a Mujoco task with action space $A = [-1,1]^d$ consists in several $2d$ dimensional vectors that have zeroes everywhere except at entry $i\in\{1,\dots,d\}$ that can either take a value of $1$ or $-1$; to that we add the zero action, for a total of $2d+1$ actions. 

Another direction to tackle continuous action spaces is to leverage an independence assumption in the policy following~\cite{Tang}. Concretely, \citeauthor{Tang} discretize each action dimension into a finite number of atoms and assume that each action dimension follows an independent distribution---which is also assumed by continuous action RL algorithms using diagonal Gaussian policies. In our setting, this translates to the logit space, which is also the space of Q-functions, by assuming separability of the Q-function approximation i.e. that $q_k$, the approximation of the Q-function $Q_k$, is given by
\begin{equation}
    q_k(s,a) = \sum_{i=1}^{\text{dim(A)}} q_k^{[i]}(s, a^{[i]}),\label{eq:fact}
\end{equation}
where $a^{[i]}$ is entry $i$ of the action vector $a$ and $\{q_k^{[i]}\}_{i=1}^{\text{dim}(A)}$ is a set of real functions. This independence assumption makes it possible to consider fine grained discretizations with extremely large but finite action spaces, from which one can sample exactly and efficiently by sampling each action dimension independently. Incidentally, this separability assumption could allow a more efficient implementation of the Gumbel-max trick discussed above.

To illustrate how this could benefit StaQ, we have implemented the discretization from \cite{Tang} in combination with StaQ using $K=7$ atoms by action dimension. Concretely, the Q-network outputs K values for each action dimension, and the Q-function for a given action is computed using Eq.~\ref{eq:fact}. We have run 5 seeds of StaQ using this Q-network architecture on the 4 MuJoCo environments shown in Fig.~\ref{fig:fact}, comparing it to the simple Boundary Actions discretisation described previously. All hyperparameters of StaQ are the same except for the target update rate of the Q-target (1000 for HalfCheetah and Walker, 2500 for Ant and Humanoid). From the plot, we can see that the Boundary Actions discretisation works rather well for most environments, and the finer-grained discretization brings further improvements on all but the Humanoid-v4 task. Moreover, using StaQ improves over previous results reported by \cite{Tang}, that used PPO and TRPO as base learners on most MuJoCo tasks, showing the impact our work could have in this line of research.

\section{Pseudocode of StaQ}

We provide in this section the pseudocode of StaQ in Alg.~\ref{alg:staq}. As an approximate policy iteration algorithm, StaQ comprises three main steps: i) data collection, ii) policy evaluation iii) policy improvement. Data collection (Line 4-5) consist in interacting with the environment to collect transitions of type (state, action, reward, next state) that are stored in a replay buffer. A policy evaluation algorithm (Eq.~\ref{eq:fqi_loss}) is then called to evaluate the current Q-function $\Qtk$ using the replay buffer. Finally, the policy update is optimization-free and simply consists in stacking the Q-function as discussed in Sec.~\ref{sec:stack}. After $K$ iterations, the last policy is returned.
\begin{algorithm}
   \caption{StaQ (Finite-memory entropy regularized policy mirror descent)}
   \label{alg:staq}
\begin{algorithmic}[1]
   \STATE {\bfseries Input:} An MDP $\cal M$, a memory-size $M$, Number of samples per iteration $N$, Replay buffer size $D$, Initial behavior policy $\pi_0^b$, entropy weight $\tau$, $\KL$ weight $\eta$, $\epsilon$-softmax exploration parameter 
   \STATE {\bfseries Output:} Policy $\pi_{K}\propto \exp(\xi_{K})$
   \FOR{$k=0$ {\bfseries to} $K - 1$}
   \STATE Interact with $\cal M$ using the behavior policy $\pi_k^b$ for $N$ times steps \label{lalg:int}
   \STATE Update replay buffer ${\cal D}_k$ to contain the last $D$ transitions \label{lalg:buff}
   \STATE Learn $\Qtk$ from ${\cal D}_k$ using a policy evaluation algorithm (Eq.~\ref{eq:fqi_loss})
   \STATE Obtain logits $\xikp$ by stacking the last $M$ Q-functions (see Sec.~\ref{sec:stack}) following the finite-memory EPMD update of Eq.~\ref{eq:ximem2}.
   \STATE Set $\pikp\propto\exp(\xikp)$ and $\pi_{k+1}^b$ to an $\epsilon$-softmax policy over $\pikp$
   \ENDFOR
\end{algorithmic}
\end{algorithm}

\end{document}